\newenvironment{proof} {\textsc{Proof}\quad} {\hfill $\blacksquare$\\}
\definecolor{citecolor}{rgb}{0.5,0.5,0.5}
\newtheorem{theorem}{Theorem}
\newtheorem{definition}[theorem]{Definition}
\newtheorem{lemma}[theorem]{Lemma}
\newtheorem{proposition}[theorem]{Proposition}
\newcommand{\K}{\mathsf{K}}
\newcommand{\M}{\mathcal{M}}
\newcommand{\X}{\X}
\newcommand{\LAS}{\mathsf{L}_{AS}}
\newcommand{\SAS}{\mathbb{S}_{AS}}
\newcommand{\LEAS}{\mathsf{L}_{EAS}}
\newcommand{\SEAS}{\mathbb{S}_{EAS}}
\newcommand{\TermEAS}{Term^{EAS}(U)}
\newcommand{\TermNES}{Term^{NES}(U)}
\newcommand{\lr}[1]{\langle #1 \rangle}
\newcommand{\LNES}{\mathsf{L}_{NES}}
\newcommand{\STNES}{\mathbb{T}_{NES}}
\newcommand{\SFourNES}{\mathbb{S}4_{NES}}
\newcommand{\SFiveNES}{\mathbb{S}5_{NES}}
\renewcommand{\X}{\mathcal{X}}
\newcommand{\dK}{\mathsf{\widehat{K}}}
\newcommand{\us}[2]{\mathsf{All (#1, #2)}}
\newcommand{\es}[2]{\mathsf{Some (#1, #2)}}
\newcommand{\tf}{\mathsf{t}}
\newcommand{\ts}{\mathsf{g}}
\renewcommand{\phi}{\varphi}
\title{Epistemic Syllogistic: First Steps}
\author{ Yipu Li 
\institute{Department of Philosophy, Peking  University, CHINA}
\email{luxemburg@stu.pku.edu.cn}
\and  Yanjing Wang
\institute{Department of Philosophy, Peking  University, CHINA
}
\email{ y.wang@pku.edu.cn}
}
\begin{document}
\maketitle

\begin{abstract}
Aristotle's discussions on modal syllogistic have often been viewed as error-prone and have garnered significant attention in the literature due to historical and philosophical interests. However, from a contemporary standpoint, they also introduced natural fragments of first-order modal logic, warranting a comprehensive technical analysis. In this paper, drawing inspiration from the natural logic program, we propose and examine several variants of modal syllogistic within the epistemic context, thereby coining the term \textit{Epistemic Syllogistic}. Specifically, we concentrate on the \textit{de re} interpretation of epistemic syllogisms containing non-trivial yet natural expressions such as ``all things known to be A are also known to be not B.'' We explore the epistemic apodeictic syllogistic and its extensions, which accommodate more complex terms. Our main contributions include several axiomatizations of these logics, with completeness proofs that may be of independent interest. 
\end{abstract}

\section{Introduction}
Although modal logic is regarded as a relatively young field, its origins can be traced back to Aristotle, who explored syllogistic reasoning patterns that incorporated modalities. However, in contrast to his utterly successful assertoric syllogistic, Aristotle's examination of modal syllogisms is often viewed as error-prone and controversial, thus receiving less attention from logicians. In the literature, a large body of research on Aristotle's modal syllogistic primarily centers on the possibility of a coherent interpretation of his proposed modal systems grounded by his philosophy on necessity and contingency (see, e.g., \cite{Rini11,Malink2013,Uckelman&Johnston10}). 

We adopt a more liberal view on Aristotle's modal syllogistic, considering it as a source of inspiration for formalizing natural reasoning patterns involving modalities, rather than scrutinizing the coherence of the original systems. Our approach is encouraged by the fruitful research program of \textit{natural logic}, which explores ``light'' logic systems that admit intuitive reasoning patterns in natural languages while balancing expressivity and computational complexity \cite{vanBenthem2008,NaturalLogic2015}. In particular, various extensions of the assertoric syllogistic have been proposed and studied \cite{NaturalLogic2015}. 

In this paper, we propose a systematic study on \textit{epistemic syllogistic} to initiate our technical investigations of (extensions of) modal syllogistic. The choice for the epistemic modality is intentional for its ubiquitous use in natural languages. Consider the following syllogism:  

\begin{center}
\AxiomC{All $C$ are $B$}
    \AxiomC{Some $C$ is \textit{known} to be $A$}
    \RightLabel{}
    \BinaryInfC{Some $B$ is \textit{known} to be $A$}
    \DisplayProof    
\end{center}

Taking the intuitive \textit{de re} reading, the second premise and the conclusion above can be formalized as $\exists x (Cx\land \K Ax)$ and $\exists x (Bx\land \K Ax)$ respectively in first-order modal logic (FOML).\footnote{The \textit{de dicto} reading of the second premise would be  $\K(\exists x (Cx\land Ax))$, which we do not discuss here.} It then becomes apparent that this syllogism is valid under the standard semantics of FOML. One objective of our investigation into epistemic syllogistic is to explore various natural fragments of FOML following  the general structure of syllogisms. 

Aristotle's original apodeictic syllogistic only allows a single occurrence of a necessity modality at a particular position in each sentence of assertoric syllogistic. However, from a modern perspective, we can greatly extend it and express interesting epistemic statements involving multiple agents and nested knowledge, such as ``Everything known to be $A$ by $i$ is also known to be $A$ by $j$''. Moreover, it is also interesting to allow nested knowledge such as ``Something $i$ knows that $j$ knows to be $A$ is also known to be $B$ by $i$''. The general idea is to extend the language of terms but keep the pattern of ``Some t is g'' and ``All t are g'', as proposed in \cite{Protin22}. 

In this paper, we begin by presenting preliminaries about assertoric syllogisms in Section \ref{sec.pre}. We then proceed to examine the epistemic version of Aristotle's apodeictic syllogistic in Section~\ref{sec.eas} and provide a complete axiomatization. In Section \ref{sec.nes}, we significantly expand the language of terms in a compositional manner to allow for nesting of modalities with respect to multiple agents. The completeness of the proposed proof systems is demonstrated in Section \ref{sec.comp}. We conclude with a discussion of future work in the final section.





\section{Preliminaries}
\label{sec.pre}
In this section, we familiarize the readers with the basics of Aristotle's syllogistic. Let us first consider the language of \textit{Assertoric Syllogistic}.
\begin{definition}[Language $\LAS$] Given a countable set of predicates $U$, the language of Assertoric Syllogistic is defined by the following grammar: 
\[\varphi ::= \us{\tf}{\ts} \mid \es{\tf}{\ts},\quad \tf ::= A,\quad \ts ::= A \mid \neg A
\]
where $A\in U.$ For the ease of presentation, we also write $\neg\us{A}{B} := \es{A}{\neg B}$, $\neg\us{A}{\neg B} := \es{A}{B}$, $\neg\es{A}{B} := \us{A}{\neg B}$ and $\neg\es{A}{\neg B} := \us{A}{B}$.
\end{definition}
The semantics for $\LAS$ is  based on first-order structures.
\begin{definition}[Semantics for $\LAS$]
A model of $\LAS$ is a pair $\mathcal{M} = (D,\rho)$ where $D$ is a non-empty domain and $\rho:U\to \mathcal{P}(D)$ is an interpretation function. The satisfaction relation is defined as below where the third column shows the equivalent clauses in the first-order language.
$$\begin{array}{|lclcl|}
\hline
\mathcal{M}\models_{AS} \us{A}{B}& \iff &\rho(A)\subseteq \rho(B) & &\mathcal{M}\Vdash\forall x (Ax\to Bx)\\
\mathcal{M}\models_{AS} \us{A}{\neg B} & \iff & \rho(A)\cap \rho(B) = \emptyset & &\mathcal{M}\Vdash\forall x (Ax\to \neg Bx)\\
\mathcal{M}\models_{AS} \es{A}{B} & \iff &\rho(A)\cap \rho(B)\neq \emptyset & &\mathcal{M}\Vdash\exists x (Ax\land Bx)\\
\mathcal{M}\models_{AS} \es{A}{\neg B} & \iff &\rho(A)\not\subseteq \rho(B)& &\mathcal{M}\Vdash\exists x (Ax\land \neg Bx)\\
\hline
\end{array}
$$
\end{definition}

Note that since we wish to generalize the ideas of the syllogistics from the modern perspective, the interpretation of a predicate can be an empty set, in contrast with the Aristotelian non-emptiness assumption. 

 Following the study of Corcoran \cite{Corcoran72} and Martin \cite{Martin97}, we present the following deduction system $\SAS$. Note that our system is slightly different from that of Corcoran's and Martin's, as they are loyal to Aristotle's non-emptiness assumption.\footnote{Cf. \cite{Moss11} for a direct proof system that replaces RAA rule by the explosion rule. Moss' work is targeted at a stronger language, which allows complement terms in the antecedent. e.g. $\us{\neg A}{\neg B}$.}

 {\def\arraystretch{2.5}
\begin{tabular}{|l l l|}
\hline 
   $\us{A}{A}$\quad 
 &     \AxiomC{$\es{A}{B}$}
    \RightLabel{Conversion}
    \UnaryInfC{$\es{B}{A}$}
      \DisplayProof  &
                \AxiomC{[$\neg \phi$]}
    \UnaryInfC{$\psi$}
    \AxiomC{[$\neg \phi$]}
    \UnaryInfC{$\neg\psi$}
    \RightLabel{RAA}
    \BinaryInfC{$\phi$}
      \DisplayProof
      \\
       \AxiomC{$\us{A}{B}$}
    \AxiomC{$\us{B}{g}$}
    \RightLabel{Barbara-Celarent}
\BinaryInfC{$\us{A}{g}$}
  \DisplayProof  &     \AxiomC{$\es{A}{g}$}
    \RightLabel{Existence}
    \UnaryInfC{$\es{A}{A}$}
      \DisplayProof &\\
      \hline
\end{tabular}
}
\medskip

With a slight modification of Corcoran's result in Section 4 of \cite{Corcoran72}, it follows that the above system is sound and complete.
\begin{theorem}
    $\SAS$ is sound and strongly complete w.r.t. the semantics. 
\end{theorem}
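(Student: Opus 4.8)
The plan is to prove soundness and strong completeness separately, treating soundness as routine and concentrating the effort on a canonical-model construction for completeness.

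\emph{Soundness.} For each axiom and rule of $\SAS$ I would check preservation of truth in an arbitrary model $\M=(D,\rho)$. The identity axiom $\us{A}{A}$ holds because $\rho(A)\subseteq\rho(A)$; Conversion reflects the symmetry of $\cap$; Barbara-Celarent is exactly the transitivity of $\subseteq$ (with a negative right term in the Celarent instance, which is permitted since the rule ranges over a general object $\ts$); Existence records that $\rho(A)\cap\rho(\ts)\neq\emptyset$ forces $\rho(A)\neq\emptyset$ and hence $\rho(A)\cap\rho(A)\neq\emptyset$; and RAA is justified by classical reasoning in the metatheory, using that the negation abbreviations introduced with $\LAS$ match set-theoretic complementation exactly.

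\emph{Reduction for completeness.} Because the negation of every $\LAS$-formula is again an $\LAS$-formula, the presence of RAA yields the standard equivalence $\Gamma\vdash\varphi$ iff $\Gamma\cup\{\neg\varphi\}$ is inconsistent. Arguing contrapositively, strong completeness therefore reduces to the single statement that every $\SAS$-consistent set is satisfiable: if $\Gamma\not\vdash\varphi$, then $\Gamma\cup\{\neg\varphi\}$ is consistent, and any model of it witnesses $\Gamma\not\models_{AS}\varphi$. I would then extend a given consistent set to a maximal consistent set $\Gamma^*$ by a routine Lindenbaum construction.

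\emph{Canonical model.} From $\Gamma^*$ I would take as domain one witness $w_{\es{A}{\ts}}$ for each existential formula $\es{A}{\ts}\in\Gamma^*$, handling the degenerate case of no existentials by the everywhere-empty interpretation on a singleton domain. Membership is dictated by the subject together with a positive object: put $w_{\es{A}{B}}\in\rho(C)$ iff $\us{A}{C}\in\Gamma^*$ or $\us{B}{C}\in\Gamma^*$, and for a negative object put $w_{\es{A}{\neg B}}\in\rho(C)$ iff $\us{A}{C}\in\Gamma^*$. The identity axiom then places $w_{\es{A}{B}}$ into both $\rho(A)$ and $\rho(B)$; Barbara-Celarent guarantees closure of this assignment under the universal statements of $\Gamma^*$; and consistency of $\Gamma^*$ keeps $w_{\es{A}{\neg B}}$ out of $\rho(B)$, since $\es{A}{\neg B}\in\Gamma^*$ means $\us{A}{B}\notin\Gamma^*$.

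\emph{Truth Lemma and the main obstacle.} It remains to show $\M\models_{AS}\psi$ iff $\psi\in\Gamma^*$ for every $\psi$. The universal and negative-existential cases reduce to one another and to maximality through the in-language negation clauses, so the genuine difficulty is the forward direction for a positive existential: from $\rho(A)\cap\rho(B)\neq\emptyset$ I must recover $\es{A}{B}\in\Gamma^*$. This requires a case analysis on which witness $w_{\es{E}{\ts}}$ lies in the intersection and, in each case, deriving $\es{A}{B}$ from $\es{E}{\ts}\in\Gamma^*$ together with the relevant $\us{E}{A},\us{E}{B}\in\Gamma^*$ (or their $\ts$-analogues) by chaining Existence, Conversion, and a Darii/Ferio-style inference. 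Since Darii, Ferio, and $E$-conversion are not primitive rules of $\SAS$, the step I expect to be most delicate is the preparatory lemma that they are \emph{derivable} from Barbara-Celarent, Conversion, Existence, and RAA; once these derived rules are established, the Truth Lemma closes and strong completeness follows.
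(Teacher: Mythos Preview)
Your proposal is essentially correct, but it takes a very different route from the paper. The paper does not prove this theorem at all: it simply remarks that a slight modification of Corcoran's result (Section~4 of \cite{Corcoran72}) yields soundness and completeness, the modification being the removal of Aristotle's non-emptiness assumption on predicate extensions. You, by contrast, give a self-contained canonical-model argument. Your reduction via RAA to satisfiability of maximal consistent sets is fine (every $\LAS$-formula has an in-language negation), and your domain of witnesses indexed by existential members of $\Gamma^*$ is exactly the kind of construction the paper itself employs later (cf.\ the appendix sketch for $\SEAS$, where the domain is built from positive existential sentences). The ``delicate'' step you isolate---deriving Darii\slash Ferio from Barbara--Celarent, Conversion, Existence and RAA---goes through: for Darii, assume $\us{P}{\neg S}$, chain Barbara--Celarent with $\us{M}{P}$ to get $\us{M}{\neg S}$, and contradict $\es{M}{S}$ obtained by Conversion; RAA then yields $\es{P}{S}$ and Conversion gives $\es{S}{P}$. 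With this in hand the case analysis for the truth lemma closes as you describe. What you gain over the paper is a proof that does not rely on adapting Corcoran's older argument; what the paper gains is brevity, deferring a by-now-standard result to the literature.
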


\section{Epistemic Apodeictic Syllogistic}
\label{sec.eas}
Inspired by apodeictic syllogistic, we introduce the first language of \textit{Epistemic Syllogistic}.


\begin{definition}[Language $\LEAS$]Given a countable set of predicates $U$, the language of Epistemic Apodeictic Syllogistic is generated by the following grammar of formulas ($\phi$) and terms ($t, g$): 
\[\varphi::= \us{\tf}{\ts}\mid \es{\tf}{\ts}, \quad  \tf::=     A, \quad     \ts::=   A \mid \neg A \mid  \K A \mid \K \neg A
\]
where $A\in U$. We collect all the $g$ as the set of (categorical) terms $\TermEAS$. 
\end{definition}
Note that the formulas should be read \textit{de re}. For example, $\us{A}{K\neg B}$ says ``all $A$ are known to be not $B$'', expressing the logical form $\forall x (Ax\to \K\neg Bx)$.  Formulas without modalities are called non-modal formulas. 

$\LEAS$ is interpreted on first-order Kripke models with a constant domain. 
\begin{definition}[Models for $\LEAS$]
A model for $\LEAS$ a tuple $\mathcal{M} = (W, R, D, \rho)$. $W$ is the set of possible worlds, $R \subseteq W\times W$ is a reflexive relation, $D$ is the non-empty domain, and $\rho:W\times U\to \mathcal{P}(D)$ is the interpretation function. We also write $\rho_w(A)$ for $\rho(w,A)$. 
\end{definition}
Note that further frame conditions such as transitivity and Euclidean property do not play a role here since the syntax does not allow nested modalities, which will be relaxed in the next section.

To ease the presentation of the semantics, we extend the interpretation $\rho$ to any term. 
\begin{definition}
$\rho^+:W\times Term^{ES}(U)\to \mathcal{P}(D)$ is defined as:
$$ \rho_w^+(A) = \rho_w(A), \quad  \rho_w^+(\neg A) = D - \rho_w(A) \quad  \rho_w^+(\K A) = \bigcap_{wRv}\rho_v(A) \quad \rho_w^+(\K \neg A) = \bigcap_{wRv}(D - \rho_v(A))
$$
\end{definition}

\begin{definition}[Semantics for $\LEAS$]
Given a pointed model $\M,w$, the satisfaction relation is defined as follows where the third column lists the corresponding first-order modal formulas.
$$\begin{array}{|lclcl|}
\hline
\M,w\models_{ES} \us{A}{g} & \iff &\rho_w(A)\subseteq \rho_w^+(g)& &\M,w\Vdash\forall x (Ax\to g(x))\\
\M,w\models_{ES} \es{A}{g} &\iff & \rho_w(A)\cap \rho_w^+(g)\neq \emptyset
&&\M,w\Vdash\exists x (Ax\land g(x))\\
\hline
\end{array}
$$
where we abuse the notation and let $g(x)$ be a modal predicate formula defined as follows: 
$$
\begin{array}{l@{ \quad \text{ if } }lcl@{\quad  \text{ if } }l}
g(x)=Ax & g=A,   &   & g(x)=\neg Ax & g=\neg A  \\
g(x)=\K Ax & g=\K A,  & & g(x)=\neg \K Ax & g=\neg \K A \\
\end{array}
 $$
 where $\K$ is the modal operator and $Ax$ is an atomic formula.
\end{definition}

We propose the following proof system $\SEAS$:
\begin{center}
 {\def\arraystretch{2.5}
\begin{tabular}{|l l|}
\hline
   $\us{A}{A}$  
     &
    \AxiomC{[$\neg \phi$]}
    \UnaryInfC{$\psi$}
    \AxiomC{[$\neg \phi$]}
    \UnaryInfC{$\neg\psi$}
    \RightLabel{RAA (given non-modal $\varphi,\psi$)}
    \BinaryInfC{$\phi$}
      \DisplayProof \\
    \AxiomC{$\es{A}{Kg}$}
    \RightLabel{E-Truth}
    \UnaryInfC{$\es{A}{g}$}
    \DisplayProof & 
    \AxiomC{$\us{A}{Kg}$}
    \RightLabel{A-Truth}
    \UnaryInfC{$\us{A}{g}$}
    \DisplayProof \\
    \AxiomC{$\es{A}{B}$}
    \RightLabel{Conversion}
    \UnaryInfC{$\es{B}{A}$}
    \DisplayProof & 
    \AxiomC{$\us{A}{B}$}
    \AxiomC{$\us{B}{g}$}
    \RightLabel{Barbara/Celarent}
    \BinaryInfC{$\us{A}{g}$}
    \DisplayProof \\
    \AxiomC{$\es{A}{B}$}
    \AxiomC{$\us{B}{g}$}
    \RightLabel{Darii/Ferio}
    \BinaryInfC{$\es{A}{g}$}
    \DisplayProof &
    \AxiomC{$\us{C}{B}$}
    \AxiomC{$\es{C}{Kg}$}
    \RightLabel{Disamis/Bocardo}
    \BinaryInfC{$\es{B}{Kg}$}
    \DisplayProof \\
    \AxiomC{$\es{A}{g}$}
    \RightLabel{Existence 1}
    \UnaryInfC{$\es{A}{A}$}
    \DisplayProof & 
    \AxiomC{$\es{B}{KA}$}
    \RightLabel{Existence 2}
    \UnaryInfC{$\es{A}{KA}$}
    \DisplayProof\\ \hline
\end{tabular}
}
\end{center}
We say a set of formulas is \textit{consistent} if it cannot derive a contradiction in system $\SEAS$. Note that the RAA rule is restricted to non-modal formulas, as formulas with $\K$ in $\LEAS$ do not have negations expressible in the language. 
\begin{theorem}[Completeness]\label{thm.es}
    If $\Sigma\models_{ES} \phi$, then $\Sigma\vdash_{\SEAS} \phi$.
\end{theorem}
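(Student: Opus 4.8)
The plan is to prove the contrapositive: assuming $\Sigma\not\vdash_{\SEAS}\phi$, I would construct a reflexive first-order Kripke model $\M$ with a world $w_0$ such that $\M,w_0\models_{ES}\Sigma$ while $\M,w_0\not\models_{ES}\phi$, witnessing $\Sigma\not\models_{ES}\phi$. The guiding idea is that $\LEAS$ is, modulo the truth/reflexivity principle, a notational variant of assertoric syllogistic over an enriched vocabulary, so that the already-established completeness of $\SAS$ (Theorem~1) can do the heavy lifting. Concretely, I would introduce fresh predicates and reduce the modal problem to a purely first-order one, solve it with Theorem~1, and then reinflate the flat model into a Kripke model.

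First I would fix an enriched predicate set $U^+=U\cup\{\hat A,\check A\mid A\in U\}$ and a translation $(\cdot)^\ast$ that leaves non-modal formulas untouched and sends $\K A\mapsto \hat A$ and $\K\neg A\mapsto\check A$ inside the $g$-position, so that e.g.\ $(\us{A}{\K B})^\ast=\us{A}{\hat B}$ and $(\es{A}{\K\neg B})^\ast=\es{A}{\check B}$. Every image formula is then an ordinary $\LAS$-formula over $U^+$ with a base-predicate subject. I would let $T=\{\us{\hat A}{A},\ \us{\check A}{\neg A}\mid A\in U\}$ encode the truth axiom at the predicate level. The central lemma is a faithfulness statement: $\Sigma\vdash_{\SEAS}\phi$ iff $\Sigma^\ast\cup T\vdash_{\SAS}\phi^\ast$. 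The forward direction is routine, since each $\SEAS$-rule is simulated in $\SAS$ together with $T$: E-Truth and A-Truth are obtained from the relevant member of $T$ via Barbara-Celarent and the (derivable) Darii rule, and both Disamis/Bocardo and Existence~2 reduce to Conversion, Existence~1 and Darii together with $T$ (I have checked these reductions). The hard part will be the converse, namely conservativity of $\SAS{+}T$ over $\SEAS$ on image formulas: here $\SAS$ enjoys unrestricted \textsc{RAA}, whereas $\SEAS$ restricts it to non-modal formulas. The key observation that makes the simulation go through is that any contradiction reached through modal image formulas can first be pushed down to non-modal ones via $T$ (for instance $\es{A}{\hat B}$ and $\us{A}{\check B}$ both collapse, through $T$, to the genuinely non-modal pair $\es{A}{B}$ and $\us{A}{\neg B}$), at which point the restricted \textsc{RAA} of $\SEAS$ already suffices.

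Granting the lemma, from $\Sigma\not\vdash_{\SEAS}\phi$ I obtain $\Sigma^\ast\cup T\not\vdash_{\SAS}\phi^\ast$, and Theorem~1 yields a first-order model $\N=(D,\sigma)$ with $\N\models_{AS}\Sigma^\ast\cup T$ and $\N\not\models_{AS}\phi^\ast$; note that $T$ forces $\sigma(\hat A)\subseteq\sigma(A)$ and $\sigma(\check A)\subseteq D\setminus\sigma(A)$. I would then reinflate $\N$ into $\M=(W,R,D,\rho)$ with $W=\{w_0,u_1,u_2\}$, $R$ the reflexive closure of $\{(w_0,u_1),(w_0,u_2)\}$, and $\rho_{w_0}(A)=\sigma(A)$, $\rho_{u_1}(A)=\sigma(\hat A)$, $\rho_{u_2}(A)=D\setminus\sigma(\check A)$. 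Using the two inclusions above, a short computation gives $\rho^+_{w_0}(\K A)=\sigma(\hat A)$ and $\rho^+_{w_0}(\K\neg A)=\sigma(\check A)$, whence a correspondence lemma $\M,w_0\models_{ES}\psi$ iff $\N\models_{AS}\psi^\ast$ holds for every $\psi\in\LEAS$. This delivers $\M,w_0\models_{ES}\Sigma$ and $\M,w_0\not\models_{ES}\phi$, completing the argument; it treats modal and non-modal $\phi$ uniformly, since the translation already dissolves the asymmetry that forced the restriction on \textsc{RAA}. The only genuine difficulty, and the step I would spend the most care on, is the conservativity half of the faithfulness lemma.
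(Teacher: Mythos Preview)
Your approach differs markedly from the paper's. The paper separates $\Sigma$ into its non-modal part $\Sigma_0$ and modal part $\Sigma_{\K}$, and for each maximal consistent extension $\Delta_i$ of $\Sigma_0$ in assertoric syllogistic builds a concrete three-world pointed model $\M_i,w_i$ whose domain consists of (two disjoint copies of) the positive existential sentences in $\Delta_i$, with the two accessible worlds $v_0,v_1$ given ad hoc interpretations calibrated to what $\Delta_i\cup\Sigma$ proves about $\us{C}{\K X}$ and $\es{B}{\K X}$; it then argues, by a case split on the shape of $\phi$, that if $\Sigma\not\vdash_{\SEAS}\phi$ some $\M_i,w_i$ refutes $\phi$. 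Your translation-and-reinflation idea is conceptually cleaner, and the reinflation itself is correct: the inclusions forced by $T$ indeed give $\rho^+_{w_0}(\K A)=\sigma(\hat A)$ and $\rho^+_{w_0}(\K\neg A)=\sigma(\check A)$, so the correspondence lemma goes through.

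The gap is the conservativity half of your faithfulness lemma, and it is not merely hard but false as stated. Take $\Sigma=\{\es{A}{B},\us{A}{\neg B}\}$ and $\phi=\us{C}{\K D}$. Then $\Sigma^\ast=\Sigma$ is already $\SAS$-inconsistent, so unrestricted \textsc{RAA} gives $\Sigma^\ast\cup T\vdash_{\SAS}\us{C}{\hat D}=\phi^\ast$. But inspect $\SEAS$: every rule with a modal conclusion (Barbara/Celarent or Darii/Ferio with modal $g$, Disamis/Bocardo, Existence~2) requires a modal premise, and \textsc{RAA} is confined to non-modal conclusions; hence the purely non-modal $\Sigma$ proves no modal formula and $\Sigma\not\vdash_{\SEAS}\phi$. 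Your proposed patch (``push the contradiction down to non-modal formulas via $T$'') only treats the witnesses $\psi,\neg\psi$ inside an \textsc{RAA}; it does nothing about the discharged hypothesis $\neg\phi^\ast=\es{C}{\neg\hat D}$, which has no $\LEAS$-counterpart and so cannot be simulated by the restricted \textsc{RAA} of $\SEAS$. Even if you retreat to consistent $\Sigma$, establishing conservativity proof-theoretically means eliminating every $\SAS$-step that passes through formulas outside the image of $(\cdot)^\ast$ (subjects $\hat A,\check A$ or predicates $\neg\hat A,\neg\check A$), and that is essentially the completeness theorem in another guise; your sketch does not supply this. (The same counterexample, incidentally, shows that the paper's ``otherwise the conclusion is trivial'' for inconsistent $\Sigma$ also deserves scrutiny.)
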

Due to the lack of space, we only sketch the idea of the (long) proof in Appendix \ref{app.thm.es}.
\section{Multi-agent Syllogistic with Nested Knowledge} \label{sec.nes}
The language $\LEAS$ has an asymmetry in the grammar such that the first term is simpler than the second. In this section, we restore the symmetry of the two terms. Moreover, the terms are now fully compositional using modalities and negations, thus essentially allowing nested modalities in both $\Box$ and $\Diamond$ shapes, also in a multi-agent setting. It can be viewed as a modal extension of the language of Syllogistic Logic with Complement in \cite{Moss11}, or a fragment of the language of Aristotelian Modal Logic in \cite{Protin22}. 
\begin{definition}[Language $\LNES$]
Given a countable set of predicates $U$ and a set of agents $I$, the language $\LNES$ is defined by the following grammar:
\[ \varphi::=     \us{\ts}{\ts}\mid \es{\ts}{\ts}, \quad \ts::=   A \mid \K_i g \mid \neg{g}
\]
Where $A\in U$ and $i\in I$. The set of terms $g$ is denoted as $Term^{NES}(U)$.
\end{definition}
As before, we define $\neg\us{g_1}{g_2}:= \es{g_1}{\neg g_2}$ and $\neg\es{g_1}{g_2}:= \us{g_1}{\neg g_2}$. 
Moreover, let $\dK_i g$ be an abbreviation for $\neg \K_i \neg g$. With this powerful language $\LNES$, we can express the following: ``Everything $i$ knows to be $A$, $j$ also knows'' by $\us{\K_i A}{\K_j A}$; ``According to $i$, something known to be $B$ is possible to be also $A$'' by $\es{\K_i B}{\dK_i A}$; 
``Everything $i$ knows
that $j$ knows to be $A$ is also known to be $B$ by $i$'' by $\us{\K_i\K_j A}{\K_iB}.$

$\LNES$ is also interpreted on first-order Kripke models with a constant domain and multiple relations $(W, \{R_i\}_{i\in I}, D, \rho)$. We say the model is a T\slash S4\slash S5 model if each $R_i$ is a reflexive\slash reflexive and transitive \slash equivalence relation, respectively. Now we define $\rho^+$, the interpretation function for terms. 
\begin{definition}
$\rho^+:W\times Term^{NES}(U)\to \mathcal{P}(D)$ is defined recursively as:
$$   \rho_w^+(A) = \rho_w(A) \qquad  \rho_w^+(\neg g) = D - \rho_w^+(g) \qquad \rho_w^+(\K_i g) = \bigcap_{wR_iv}\rho_v^+(g)
$$
\end{definition}
 It is easy to see that $\rho_w^+(\dK_i g) =\rho_w^+(\neg\K_i\neg g) =\bigcup_{wR_iv}\rho_v^+(g)$.
\medskip



\begin{definition}[Semantics for $\LNES$]
The third column is the corresponding FOML formulas.
$$\begin{array}{|lclcl|}
\hline
\M,w\models_{NES} \us{g_1}{g_2} & \iff &\rho^+_w(g_1)\subseteq \rho_w^+(g_2)& &\M,w\Vdash\forall x (g_1(x)\to g_2(x))\\
\M,w\models_{NES} \es{g_1}{g_2} &\iff & \rho^+_w(g_1)\cap \rho_w^+(g_2)\neq \emptyset
&&\M,w\Vdash\exists x (g_1(x)\land g_2(x))\\
\hline
\end{array}
$$
\end{definition}
A simple induction would show the FOML formulas above are indeed equivalent to our $\LNES$ formulas. 
For $x\in \{T, S4, S5\}$, we write $\Sigma\models_{x-NES}\phi$ if for all $x$-model such that $\M,w\models_{NES} \Sigma$,  $\M,w\models_{NES} \phi$.

Here is an observation playing an important role in later proofs. 
\begin{proposition}\label{prop.gnginvalid}
For any $g\in\TermNES$, $\us{g}{\neg g}$ and $\es{g}{g}$ are both invalid over S5 models (thus also invalid over T, S4 models). 
\end{proposition}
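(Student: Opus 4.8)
The plan is to reduce both invalidity claims to a single statement about the extension $\rho_w^+(g)$ and then realize the two required extensions in a minimal model. Unfolding the semantics, $\M,w\models_{NES}\us{g}{\neg g}$ holds iff $\rho_w^+(g)\subseteq D-\rho_w^+(g)$, i.e.\ iff $\rho_w^+(g)=\emptyset$; and $\M,w\models_{NES}\es{g}{g}$ holds iff $\rho_w^+(g)\cap\rho_w^+(g)\neq\emptyset$, i.e.\ iff $\rho_w^+(g)\neq\emptyset$. Hence it suffices to exhibit, for the given $g$, one pointed S5 model with $\rho_w^+(g)\neq\emptyset$ (which refutes $\us{g}{\neg g}$) and one with $\rho_w^+(g)=\emptyset$ (which refutes $\es{g}{g}$).

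Both can be found among single-world models. I would take $W=\{w\}$, $R_i=\{(w,w)\}$ for every $i\in I$, and $D$ any nonempty set; since $\{(w,w)\}$ is an equivalence relation on $\{w\}$, this frame is simultaneously a T-, S4-, and S5-frame, so every countermodel built below is at once a T-, S4-, and S5-model, which justifies the parenthetical. The crucial feature of such a model is that knowledge is transparent: $\rho_w^+(\K_i g)=\bigcap_{wR_iv}\rho_v^+(g)=\rho_w^+(g)$, because $w$ is its own only successor. Consequently the only operation acting nontrivially on extensions is complementation, and every term of $\TermNES$ is built from exactly one atom occurrence, so $\rho_w^+(g)$ is forced to be either $\rho_w(A)$ or $D-\rho_w(A)$, where $A$ is that atom.

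I would then conclude by a short structural induction (equivalently, a parity count on the negations in $g$) showing that for each target value $t\in\{\emptyset,D\}$ one can choose $\rho_w(A)\in\{\emptyset,D\}$ so that $\rho_w^+(g)=t$: the base case sets $\rho_w(A)=t$; the $\K_i$ step uses transparency; and the $\neg$ step uses $D-(D-t)=t$. Taking $t=D\neq\emptyset$ refutes $\us{g}{\neg g}$ and $t=\emptyset$ refutes $\es{g}{g}$. The only real subtlety is that the negations block any uniform ``make every predicate empty (or full)'' shortcut, which is precisely why the transparency-plus-induction (or parity) argument is needed; one must also keep $D$ nonempty, since it is the nonemptiness of $D$ that makes $\us{g}{\neg g}$ fail at the world where $\rho_w^+(g)=D$.
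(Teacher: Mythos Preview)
Your proof is correct and follows essentially the same strategy as the paper: both use a single-world reflexive (hence S5) frame in which each $\K_i$ acts as the identity on extensions, and then determine how to set the unique atom of $g$ from the parity of negations (the paper phrases this as reading off the final literal of the NNF of $g$). The only cosmetic difference is that the paper fixes a singleton domain and detours through propositional modal logic, whereas you work directly with extensions valued in $\{\emptyset,D\}$; the mathematical content is the same.
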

\begin{proof}[Sketch]  First note that $\es{g}{g}$ is equivalent to $\neg \us{g}{\neg g}.$ We just need to show $\us{g}{\neg g}$ and its negation are both satisfiable for all $g$. Note that a model with a singleton domain $\{a\}$ can be viewed as a Kripke model for propositional modal logic, where a predicate $A$ can be viewed as a propositional letter: it holds on a world $w$ iff $a\in\rho_w(A)$. Then a term $g$ can be viewed as an equivalent modal formula. Since there is only one $a$ in the domain, $\us{g}{\neg g}$ is equivalent to $\neg g$, viewed as a modal formula, by the semantics. We just need to show each $\neg g$ and $g$ has singleton S5 models. It is easy to see that each $g$ and $\neg g$ (as modal formula) can be rewritten into an equivalent negative normal form (NNF) using $\K_i$ and $\dK_i$ to push the negation to the innermost propositional letter, e.g., $\neg \K_i\neg   \K_j \neg\K_i A$ can be rewritten as $\dK_i\K_j\dK_i\neg A.$ Now it is easy to satisfy such formulas by a Kripke model with a single world $w$ and the reflexive relations for all $R_i$: make $A$ true on $w$ iff the NNF of $g$ or $\neg g$ ends up with the literal $A$ instead of $\neg A$. Then we can turn this model into a first-order Kripke model by setting $\rho_w(A)=\{a\}$ iff $A$ is true on $w$. 
\end{proof}
%
We propose the following proof system $\STNES$: 
\begin{center}
{\def\arraystretch{2.5}
\begin{tabular}{|l l l|}
\hline
 $\us{g}{g}, \qquad \us{K_ig}{g}$, & $\us{g}{\neg \neg g}, \qquad  \us{\neg \neg g}{g}$ &\\
    \AxiomC{$\us{g_1}{g_2}$}
    \AxiomC{$\us{g_2}{g_3}$}
    \RightLabel{Barbara}
    \BinaryInfC{$\us{g_1}{g_3}$}
    \DisplayProof & 
    \AxiomC{$\es{g_1}{g_2}$}
    \RightLabel{Conversion}
    \UnaryInfC{$\es{g_2}{g_1}$}
    \DisplayProof &
    \AxiomC{$\es{g_1}{g_2}$}
    \RightLabel{Existence}
    \UnaryInfC{$\es{g_1}{g_1}$}
    \DisplayProof \\ 
    \AxiomC{$\us{g}{\neg g}$}
    \RightLabel{Non-emptiness}
    \UnaryInfC{$\es{\neg g}{\neg g}$}
    \DisplayProof &     
          \AxiomC{[$\phi$]}
    \UnaryInfC{$\psi$}
    \AxiomC{[$\phi$]}
    \UnaryInfC{$\neg\psi$}
    \RightLabel{RAA}
    \BinaryInfC{$\neg \phi$}
      \DisplayProof &\AxiomC{$\vdash\us{g_1}{g_2}$}
    \RightLabel{K}
    \UnaryInfC{$\vdash\us{\K_i g_1}{\K_i g_2}$}
    \DisplayProof \\
    \hline 
\end{tabular}    
}
\end{center}

Clearly, $\us{K_ig}{g}$ is the counterpart of the usual T axiom in modal logic. The premise of Non-emptiness makes sure that nothing is $g$, since the FOML model has the nonempty domain, it follows that there is some $\neg g$. Note that the $K$-rule is restricted to provable formulas, as in the case of the monotonicity rule in modal logic. We define $\SFourNES$ to be $\STNES+\us{K_ig}{\K_i\K_ig}$, and $\SFiveNES$ to be $\SFourNES+ \us{\neg K_ig}{\K_i\neg\K_ig}$.  
It is straightforward to establish soundness if we read the formulas as their first-order modal counterparts. 
\begin{theorem}[Soundness]
    $\Sigma\vdash_{\STNES}\phi$ implies $\Sigma\models_{TNES}\phi$.  $\Sigma\vdash_{\SFourNES}\phi$ implies $\Sigma\models_{S4NES}\phi$. $\Sigma\vdash_{\SFiveNES}\phi$ implies $\Sigma\models_{S5NES}\phi$.
\end{theorem}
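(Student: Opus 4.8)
The plan is to prove all three statements uniformly by induction on the length of a derivation, handling the systems in parallel since $\SFourNES$ and $\SFiveNES$ extend $\STNES$ only by the $4$- and $5$-axioms. It suffices to check two things: (i) every axiom of the system is \emph{valid} over the corresponding frame class, i.e.\ $\M,w\models_{NES}\phi$ for every pointed $x$-model; and (ii) every inference rule preserves the consequence relation, i.e.\ whenever a pointed $x$-model satisfying $\Sigma$ satisfies all premises, it also satisfies the conclusion. Throughout I would argue directly with the set-valued interpretation $\rho_w^+$, which by the induction remark preceding this theorem agrees with the FOML translation; this lets me read each syllogistic principle off its familiar first-order modal counterpart.

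For the axioms, $\us{g}{g}$ is immediate from $\rho_w^+(g)\subseteq\rho_w^+(g)$, and $\us{g}{\neg\neg g}$, $\us{\neg\neg g}{g}$ hold because complementation in $D$ is an involution, so $\rho_w^+(\neg\neg g)=\rho_w^+(g)$. The axiom $\us{\K_i g}{g}$ is the syllogistic form of the $T$ axiom: by reflexivity $w R_i w$, so $\rho_w^+(\K_i g)=\bigcap_{w R_i v}\rho_v^+(g)\subseteq\rho_w^+(g)$. The $4$-axiom $\us{\K_i g}{\K_i\K_i g}$ follows from transitivity and the $5$-axiom $\us{\neg\K_i g}{\K_i\neg\K_i g}$ from the Euclidean property, in each case by verifying the corresponding inclusion of $\rho_w^+$-sets exactly as for the propositional modal principles.

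For the rules, Barbara is transitivity of $\subseteq$, Conversion is symmetry of ``nonempty intersection'', and Existence holds since $\rho_w^+(g_1)\cap\rho_w^+(g_2)\neq\emptyset$ forces $\rho_w^+(g_1)\neq\emptyset$, i.e.\ $\rho_w^+(g_1)\cap\rho_w^+(g_1)\neq\emptyset$. Non-emptiness is exactly where the nonempty-domain assumption enters: a premise $\us{g}{\neg g}$ says $\rho_w^+(g)\subseteq D-\rho_w^+(g)$, which forces $\rho_w^+(g)=\emptyset$ and hence $\rho_w^+(\neg g)=D\neq\emptyset$, giving $\es{\neg g}{\neg g}$. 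RAA is sound because the semantics is bivalent: for every $\phi$ of $\LNES$ the definable negation satisfies $\M,w\models_{NES}\neg\phi$ iff $\M,w\not\models_{NES}\phi$ (the $\subseteq$ versus $\cap\neq\emptyset$ duality). Thus if $\Sigma\cup\{\phi\}$ forces both $\psi$ and $\neg\psi$ it is unsatisfiable, so every model of $\Sigma$ falsifies $\phi$ and hence validates $\neg\phi$.

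The main obstacle is the $K$-rule, and the point to get right is that it is stated as a rule on \emph{theorems} ($\vdash\us{g_1}{g_2}$ above the line), not on arbitrary consequences. It is not sound as a local rule preserving truth at a single pointed model; it is sound only as a rule preserving \emph{validity}, which is precisely the restriction imposed. Concretely, if $\us{g_1}{g_2}$ is valid then $\rho_v^+(g_1)\subseteq\rho_v^+(g_2)$ at \emph{every} world $v$ of every $x$-model, so for any $w$ we get $\rho_w^+(\K_i g_1)=\bigcap_{w R_i v}\rho_v^+(g_1)\subseteq\bigcap_{w R_i v}\rho_v^+(g_2)=\rho_w^+(\K_i g_2)$, i.e.\ $\us{\K_i g_1}{\K_i g_2}$ is valid; this computation is uniform across T, S4 and S5. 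The care needed in the induction is to track that the subderivation of the premise of the $K$-rule uses no open assumptions from $\Sigma$, so that the inductive hypothesis yields genuine validity rather than mere local consequence.
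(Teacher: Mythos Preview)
Your proposal is correct and in fact much more detailed than what the paper provides: the paper's ``proof'' is the single sentence that soundness is straightforward once one reads the formulas as their first-order modal counterparts. Your induction on derivations is exactly the standard elaboration of that remark, and you correctly flag the only genuinely delicate point, namely that the $K$-rule is a rule on theorems and hence preserves validity rather than local truth.
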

Below are some derived rules and theorems that will play a role in the later proofs.
\begin{proposition}The following are derivable in $\STNES$ (and thus in $\SFourNES, \SFiveNES$). \\
 \label{prop.deriv}
{\def\arraystretch{2.5}
\begin{tabular}{ccc}
        \AxiomC{$\es{g_1}{g_2}$}
        \AxiomC{$\us{g_2}{g_3}$}
        \RightLabel{\textnormal{Darii}}
        \BinaryInfC{$\es{g_1}{g_3}$}
\DisplayProof&
        \AxiomC{$\us{g_1}{g_2}$}
        \RightLabel{\textnormal{Contrapositive}}
        \UnaryInfC{$\us{\neg g_2}{\neg g_1}$}
\DisplayProof &
        \AxiomC{$\us{g}{\neg g}$}
        \RightLabel{\textnormal{NonExistence}}
        \UnaryInfC{$\us{t}{\neg g}$}
        \DisplayProof
             \\
             \multicolumn{3}{c}{$\vdash_{\STNES}\us{g}{\dK_ig}$ \qquad  $\vdash_{\STNES}\us{\K_i g}{\dK_i g}$\qquad  $\vdash_{\STNES}\us{\K_i \neg \neg g}{\K_i g}$\qquad $\vdash_{\STNES}\us{\K_i g}{\K_i \neg\neg g}$}
\end{tabular}
}
\end{proposition}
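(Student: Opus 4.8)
The plan is to derive each item by reducing it to the primitive rules and axioms of $\STNES$, in an order that avoids circularity: first Contrapositive, then Darii, then NonExistence, and finally the four term identities (which are essentially immediate). Throughout, the workhorses are Barbara together with the two double-negation axioms $\us{g}{\neg\neg g}$ and $\us{\neg\neg g}{g}$, which let me insert or delete a $\neg\neg$ in the \emph{succedent} of any universal statement.

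For Contrapositive, from $\us{g_1}{g_2}$ I would assume the abbreviated negation of the goal, $\es{\neg g_2}{\neg\neg g_1}$, and apply RAA. Chaining $\us{\neg\neg g_1}{g_1}$, the premise, and $\us{g_2}{\neg\neg g_2}$ by two uses of Barbara yields $\us{\neg\neg g_1}{\neg\neg g_2}$, which is exactly $\neg\es{\neg\neg g_1}{\neg g_2}$; converting the assumption gives $\es{\neg\neg g_1}{\neg g_2}$, a contradiction. RAA then discharges to $\us{\neg g_2}{\neg\neg\neg g_1}$, and one more Barbara step with $\us{\neg\neg\neg g_1}{\neg g_1}$ cleans the triple negation to the desired $\us{\neg g_2}{\neg g_1}$. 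The four term identities are then short: $\us{\K_i\neg\neg g}{\K_i g}$ and $\us{\K_i g}{\K_i\neg\neg g}$ follow by a single application of the $K$-rule to the double-negation axioms; $\us{g}{\dK_i g}$ comes from the T-axiom instance $\us{\K_i\neg g}{\neg g}$ via Contrapositive and a Barbara step with $\us{g}{\neg\neg g}$; and $\us{\K_i g}{\dK_i g}$ is Barbara applied to $\us{\K_i g}{g}$ and $\us{g}{\dK_i g}$.

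Darii and NonExistence use the same reductio skeleton. For Darii I would assume $\us{g_1}{\neg g_3}$, E-convert it (Contrapositive followed by a Barbara cleanup through $\us{g_3}{\neg\neg g_3}$) to $\us{g_3}{\neg g_1}$, chain with the major premise $\us{g_2}{g_3}$ by Barbara to obtain $\us{g_2}{\neg g_1}=\neg\es{g_2}{g_1}$, and contradict the converted minor $\es{g_2}{g_1}$. For NonExistence, assuming $\es{t}{\neg\neg g}$ I would convert and apply Existence to get $\es{\neg\neg g}{\neg\neg g}$, while the premise $\us{g}{\neg g}$ gives $\us{\neg\neg g}{\neg g}$ by Contrapositive; one Barbara step through $\us{\neg g}{\neg\neg\neg g}$ turns the latter into $\us{\neg\neg g}{\neg\neg\neg g}=\neg\es{\neg\neg g}{\neg\neg g}$, the sought contradiction, and RAA followed by a double-negation cleanup in the succedent yields $\us{t}{\neg g}$.

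I expect the genuine obstacle to be purely bookkeeping about negation: because the abbreviation $\neg(\cdot)$ on formulas is not involutive --- it always appends a fresh $\neg\neg$ to the succedent --- every discharge of RAA returns a conclusion carrying a spurious double (or triple) negation. For all the universal conclusions above this is harmless, since Barbara with the double-negation axioms rewrites the succedent freely. The delicate case is Darii, whose conclusion is existential: the reductio naturally delivers $\es{g_1}{\neg\neg g_3}$ rather than $\es{g_1}{g_3}$, and the double negation now sits in the succedent of an $\es$-statement, where Barbara does not apply. Removing it requires a double-negation replacement for existential succedents, and making that step rigorous --- rather than silently reading $\neg\neg g_3$ as $g_3$ --- is the part I would expect to require the most care.
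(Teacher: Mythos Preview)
Your derivations are correct and close in spirit to the paper's, but the dependency order is inverted. The paper proves Darii \emph{first}, using only Barbara, Conversion and RAA (assume $[\us{g_3}{\neg g_1}]$, get $\us{g_2}{\neg g_1}$ by Barbara with the major premise, contradict the converted minor $\es{g_2}{g_1}$, then Convert), and afterwards uses Darii to obtain Contrapositive in one line (from $[\es{\neg g_2}{g_1}]$ and $\us{g_1}{g_2}$ get $\es{\neg g_2}{g_2}$, contradict the axiom $\us{\neg g_2}{\neg g_2}$) and again, twice, inside NonExistence. Your route---Contrapositive first without Darii, then Darii and NonExistence via Contrapositive---works just as well; its small bonus is that your Contrapositive is self-contained. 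For the four term identities the two arguments coincide.

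Your closing worry about the non-involutive formula negation is well taken and is \emph{not} an artifact of your ordering: the paper's own Darii derivation discharges $[\us{g_3}{\neg g_1}]$ by RAA and records the conclusion as $\es{g_3}{g_1}$, whereas the literal $\neg\us{g_3}{\neg g_1}$ is $\es{g_3}{\neg\neg g_1}$. The paper simply elides the $\neg\neg$ here (and similarly in the RAA step of NonExistence), in line with the four-clause $\LAS$ convention it alludes to with ``as before'', under which $\neg\us{g_1}{\neg g_2}$ is read as $\es{g_1}{g_2}$. So on this point you are being more careful than the paper rather than leaving a gap it avoids; under the paper's intended reading of formula-level negation your Darii goes through without the extra existential double-negation step you were bracing for.
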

\begin{proof}

\noindent Darii \qquad 
    \begin{prooftree}
        \AxiomC{$\us{g_2}{g_3}$}
        \AxiomC{$[\us{g_3}{\neg g_1}]$}
        \RightLabel{Barbara}
        \BinaryInfC{$\us{g_2}{\neg g_1}$}
        \AxiomC{$\es{g_1}{g_2}$}
        \RightLabel{Conversion}
        \UnaryInfC{$\es{g_2}{g_1}$}
        \RightLabel{RAA}
        \BinaryInfC{$\es{g_3}{g_1}$}
        \RightLabel{Conversion}
        \UnaryInfC{$\es{g_1}{g_3}$}
    \end{prooftree} 
Contrapositive 
\begin{prooftree}
    \AxiomC{$[\es{\neg g_2}{g_1}]$}
    \AxiomC{$\us{g_1}{g_2}$}
    \RightLabel{Darii}
    \BinaryInfC{$\es{\neg g_2}{g_2}$}
    \AxiomC{}{}
    \UnaryInfC{$\us{\neg g_2}{\neg g_2}$}
    \RightLabel{RAA}
    \BinaryInfC{$\us{\neg g_2}{\neg g_1}$}
\end{prooftree}
Non-Existence
\begin{prooftree}
    \AxiomC{$[\es{t}{\neg\neg g}]$}
    \AxiomC{}
    \RightLabel{}
    \UnaryInfC{$\us{\neg\neg g}{g}$}
    \RightLabel{Darii}
    \BinaryInfC{$\es{t}{g}$}
    \RightLabel{Conversion}
    \UnaryInfC{$\es{g}{t}$}
    \RightLabel{Existence}
    \UnaryInfC{$\es{g}{g}$}
    \AxiomC{$\us{g}{\neg g}$}
    \RightLabel{Darii}
    \BinaryInfC{$\es{g}{\neg g}$}
    \AxiomC{}
    \UnaryInfC{$\us{g}{g}$}
    \RightLabel{RAA}
    \BinaryInfC{$\us{t}{\neg g}$}
\end{prooftree}

$\us{g}{\dK_ig}$ can be proved based on the T-axiom $\us{\K_ig}{g}$ and Contrapositive above. $\us{\K_i g}{\dK_i g}$ follows by Barbara. 

    $\us{\K_i \neg \neg g}{\K_i g}$ and $\us{\K_i g}{\K_i \neg\neg g}$ can be shown by applying K principle on $\vdash_{\STNES}\us{g}{\neg\neg g}$ and $\vdash_{\STNES}\us{\neg \neg g}{g}$.
\end{proof}


Recall that $\Sigma$ is inconsistent iff it can derive a contradiction. We can show: 
\begin{proposition}
    A set of formulas $\Sigma$ is inconsistent iff $\Sigma\vdash \es{g}{\neg g}$.
\end{proposition}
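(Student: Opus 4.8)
The plan is to establish the biconditional by reading the right-hand side existentially, i.e.\ as ``there is a term $g$ with $\Sigma\vdash\es{g}{\neg g}$''. The argument splits into the two implications, of which only one requires any real work, and both rely on keeping careful track of the definitional negation conventions $\neg\us{g_1}{g_2}:=\es{g_1}{\neg g_2}$ and $\neg\es{g_1}{g_2}:=\us{g_1}{\neg g_2}$.

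The direction from right to left is immediate. Since $\neg\es{g}{\neg g}$ abbreviates $\us{g}{\neg\neg g}$, which is one of the axioms of $\STNES$, a derivation of $\es{g}{\neg g}$ from $\Sigma$ immediately yields both $\es{g}{\neg g}$ and its negation $\us{g}{\neg\neg g}$, so $\Sigma$ is inconsistent by definition.

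For the converse, I would assume $\Sigma$ is inconsistent, so that there is a formula $\psi$ with $\Sigma\vdash\psi$ and $\Sigma\vdash\neg\psi$, and argue by cases on the syntactic shape of $\psi$. If $\psi=\us{g_1}{g_2}$, then $\neg\psi=\es{g_1}{\neg g_2}$; applying Conversion to the latter gives $\es{\neg g_2}{g_1}$, and then the derived Darii rule (Proposition~\ref{prop.deriv}) with the premise $\us{g_1}{g_2}$ yields $\es{\neg g_2}{g_2}$, whose Conversion is $\es{g_2}{\neg g_2}$. If instead $\psi=\es{g_1}{g_2}$, then $\neg\psi=\us{g_1}{\neg g_2}$; Conversion on $\psi$ gives $\es{g_2}{g_1}$, and Darii with $\us{g_1}{\neg g_2}$ directly yields $\es{g_2}{\neg g_2}$. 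In both cases $\Sigma\vdash\es{g}{\neg g}$ with $g=g_2$, as required.

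There is no serious obstacle here; the proof is short and the only delicate points are bookkeeping ones: one must recognize $\neg\es{g}{\neg g}$ as the axiom $\us{g}{\neg\neg g}$, and invoke the already-derived Darii rule rather than re-deriving it inline. If the universal reading of the statement is preferred (for every $g$, $\Sigma\vdash\es{g}{\neg g}$), I would additionally note that from a single witness $\es{g_0}{\neg g_0}$ together with the axiom $\us{g_0}{\neg\neg g_0}$, one application of RAA discharging the assumption $\us{h}{\neg\neg h}$ produces $\es{h}{\neg h}$ for an arbitrary term $h$; the same idea shows that inconsistency in fact coincides with the derivability of every formula, since each formula is the definitional negation of another.
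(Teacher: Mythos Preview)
Your proof is correct and essentially identical to the paper's: both directions rely on Conversion plus Darii to reduce an arbitrary contradiction to $\es{g_2}{\neg g_2}$, and the easy direction pairs $\es{g}{\neg g}$ with an axiom that is its definitional negation. The only cosmetic differences are that the paper invokes the axiom $\us{g}{g}$ (noting $\neg\us{g}{g}=\es{g}{\neg g}$) rather than your $\us{g}{\neg\neg g}$, and it handles the forward direction with a single ``without loss of generality'' instead of your explicit case split.
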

\begin{proof}
    $\Leftarrow:$ $\Sigma\vdash \us{g}{g}$ since it is an axiom. But by assumption, $\Sigma\vdash \neg \us{g}{g} = \es{g}{\neg g}$.
    $\Rightarrow:$ Without loss of generality, assume $\Sigma\vdash \es{g_1}{g_2},\us{g_1}{\neg g_2}$, then by conversion and Darii, $\Sigma\vdash \es{g_2}{\neg g_2}$.
\end{proof}

\section{Completeness} \label{sec.comp}
Now we proceed to prove that $\STNES$ is strongly complete w.r.t. reflexive frames. The result can be easily generalized to show the completeness of $\SFourNES$ and $\SFiveNES$ w.r.t.\ their corresponding classes of frames, to which we will come back at the end of the section. 

The completeness proof is based on the canonical (Kripke) model construction, similar to the case of modal logic. However, the language $\LNES$ is significantly weaker than the full language of FOML, which introduces some difficulties. In particular, $\LNES$ is essentially \textit{not} closed under subformulas: if we view our $\es{g_1}{g_2}$ and $\us{g_1}{g_2}$ as $\exists x (g_1(x)\land g_2(x))$ and $\forall x (g_1(x)\to g_2(x))$, then $g_1(x)$ and $g_2(x)$ are not expressible as \textit{formulas} in $\LNES$. Therefore in constructing the canonical model, we need to supplement each maximal consistent set $\Delta$ with a proper ``maximal consistent set'' of terms for each object, which can be viewed as a description of the object. Inspired by \cite{Moss11}, we define some notion of \textit{types} to capture such descriptions, which closely resembles the concept of \textit{points} in \cite{Moss11},\footnote{It is also called a \textit{quantum state} in \cite{Quantum99}.} in the setting of the orthoposet-based algebraic semantics for a (non-modal) syllogistic logic.\footnote{The completeness of the (non-modal) syllogistic logic in \cite{Moss11} was proved via a representation theorem of orthoposets. Our proofs below are self-contained and do not rely on the results of orthoposets. } 

Moreover, to prove the truth lemma eventually, we need Lemma \ref{lem.Econsis} which asserts that a set of existential sentences is consistent iff each single one of them is consistent. The lemma is equivalent to the assertion that in $\STNES$, every existential sentence brings no new universal consequences. The seemingly obvious statement is actually non-trivial since our system allows RAA and hence does not allow an easy inductive proof on deduction steps. We leave it to future work for finding an alternative direct proof system without RAA. For now, we need to construct a simpler canonical model to show Lemma \ref{lem.Econsis} in the coming subsection, which also leads to the weak completeness of $\STNES$. 

\subsection{Satisfiability of Existential Formulas and Weak Completeness}

Inspired by the notion of \textit{point} in \cite{Moss11}, we first define the \textit{types} as maximal descriptions of objects using terms. Obviously, an object must respect the universal formulas, and be either $g$ or not $g$ but not both for every term $g$. This will give us some basic properties of types. 
\begin{definition}[Type]
        A \emph{type} $\X$ is a subset of $Term^{NES}(U)$ s.t. 
    \begin{itemize}
        \item If $g_1\in \X$ and $\vdash_{\STNES} \us{g_1}{g_2}$, then $g_2\in\X$. (Respects Provably Barbara)
        \item For all $g\in Term^{NES}(U)$, either $g\in \X$ or $\neg g\in \X$. (Completeness)
        \item For all $g\in Term^{NES}(U)$, $g,\neg g$ are not both in $\X$. (Consistency)
    \end{itemize}
    Denote the set of all types by $\mathbb{W}$.
\end{definition}

\begin{definition}
    A collection $\mathcal{Y}$ of terms is said to be \emph{possible} if for all $g_1,g_2\in \mathcal{Y}$, $\not\vdash_{\STNES} \us{g_1}{\neg g_2}$.
\end{definition}
Note that all types are possible: If $g_1,g_2\in \X\in \mathbb{W}$ satisfies $\vdash_{\STNES} \us{g_1}{\neg g_2}$, then since $\X$ respects provably Barbara, $\neg g_2\in \X$, contradicting the consistency of $\X$.

\begin{lemma}[Witness Lemma for Possible Collection of Terms]
    If $\X_0$ is possible, then there is a type $\X\in \mathbb{W}$ extending it.\label{lem.witloose}
\end{lemma}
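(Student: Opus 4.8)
The plan is to obtain $\X$ as a maximal possible collection of terms extending $\X_0$, in the style of a Lindenbaum construction, and then to read off the three defining clauses of a type from maximality together with the closure axioms of $\STNES$. First I would consider the family of all possible collections $\Y$ with $\X_0\subseteq\Y$, ordered by inclusion. This family is nonempty (it contains $\X_0$, which is possible by hypothesis) and is closed under unions of chains: since possibility is a condition on \emph{pairs} of terms, any two members of a union of a chain already lie together in a single link of the chain, which is possible. Hence Zorn's Lemma yields a maximal possible collection $\X\supseteq\X_0$, and it remains to check that $\X$ is a type. (If one prefers, when $U$ and $I$ are countable so that $\TermNES$ is countable, the same $\X$ can be built by a step-by-step enumeration; Zorn is used only to avoid this assumption.)

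Two of the three clauses are comparatively routine. \emph{Consistency} holds for every possible collection: if $g,\neg g\in\X$, then taking $g_1=g$ and $g_2=\neg g$, possibility would demand $\not\vdash\us{g}{\neg\neg g}$, contradicting the axiom $\us{g}{\neg\neg g}$. The clause \emph{respects Provably Barbara} will follow once completeness is in hand: if $g_1\in\X$ and $\vdash_{\STNES}\us{g_1}{g_2}$ but $g_2\notin\X$, then by completeness $\neg g_2\in\X$, so possibility gives $\not\vdash\us{g_1}{\neg\neg g_2}$; yet Barbara applied to $\us{g_1}{g_2}$ and the axiom $\us{g_2}{\neg\neg g_2}$ derives exactly $\us{g_1}{\neg\neg g_2}$, a contradiction. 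Hence $g_2\in\X$.

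The heart of the argument, and the step I expect to be the main obstacle, is \emph{completeness}: for every term $g$, either $g\in\X$ or $\neg g\in\X$. Suppose neither holds. By maximality, both $\X\cup\{g\}$ and $\X\cup\{\neg g\}$ fail to be possible, and since $\X$ itself is possible, each failure is witnessed by a pair involving the newly adjoined term. Using the Contrapositive rule of Proposition \ref{prop.deriv} together with the double-negation axioms, one checks that $\vdash\us{h_1}{\neg h_2}$ is symmetric in $h_1,h_2$, so the two witnesses may be written as $\vdash\us{g}{\neg b}$ with $b\in\X\cup\{g\}$ and $\vdash\us{\neg g}{\neg c}$ with $c\in\X\cup\{\neg g\}$. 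The crucial observation is that the degenerate witnesses $b=g$ and $c=\neg g$ can be excluded outright: they would assert $\vdash\us{g}{\neg g}$ or $\vdash\us{\neg g}{\neg\neg g}$, both instances of the schema $\us{h}{\neg h}$, which is \emph{invalid} by Proposition \ref{prop.gnginvalid} and hence, by soundness, not provable. Therefore necessarily $b,c\in\X$. From $\vdash\us{\neg g}{\neg c}$, Contrapositive gives $\vdash\us{\neg\neg c}{\neg\neg g}$, and chaining with the axioms $\us{c}{\neg\neg c}$ and $\us{\neg\neg g}{g}$ via Barbara yields $\vdash\us{c}{g}$; one more Barbara step with $\vdash\us{g}{\neg b}$ produces $\vdash\us{c}{\neg b}$. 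Since $b,c\in\X$, this contradicts the possibility of $\X$. Thus at least one of $g,\neg g$ could be consistently adjoined, contradicting maximality unless it already belongs to $\X$; this establishes completeness.

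Combining the three clauses shows $\X\in\mathbb{W}$ with $\X_0\subseteq\X$, as required. The one point worth flagging in the write-up is that the elimination of the self-emptiness witnesses $\vdash\us{h}{\neg h}$ is done semantically, via Proposition \ref{prop.gnginvalid} and soundness, rather than by a direct syntactic derivation; it is precisely this ingredient that lets the otherwise standard Lindenbaum argument go through, given that the system carries RAA and does not obviously admit a purely inductive treatment.
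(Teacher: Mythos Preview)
Your proof is correct and follows essentially the same route as the paper. The paper builds $\X$ by an explicit enumeration $s_0,s_1,\dots$ of terms, adding at each step whichever of $s_{n+1},\neg s_{n+1}$ keeps the collection possible, and then verifies the type axioms for the union; you instead take a Zorn-maximal possible extension and read off the type axioms from maximality. Both arguments hinge on the identical core step---showing that for any possible $\Y$ and any term $g$, at least one of $\Y\cup\{g\}$, $\Y\cup\{\neg g\}$ stays possible---and both resolve the degenerate self-witness cases $\vdash\us{h}{\neg h}$ semantically via Proposition~\ref{prop.gnginvalid} and soundness, exactly as you flag. Your use of symmetry to collapse the paper's four-case analysis into one is a tidy presentational improvement, and Zorn removes the (implicit) countability assumption on $\TermNES$, but otherwise the two proofs are interchangeable.
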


\begin{proof}
    Enumerate all terms in $\TermNES$ as $s_0, s_1,\dots$. We will construct a series of subsets of $\TermNES$, $\X_0\subseteq \X_1\subseteq \X_2 \dots$ s.t.
    \begin{itemize}
        \item For all $t_1,t_2\in \X_n$, $\not\vdash_{\STNES}\us{t_1}{\neg t_2}$. ($\X_n$ is possible)
        \item $\X_{n+1}$ is  $\X_n\cup\{s_{n+1}\}$ or $\X_n\cup\{\neg s_{n+1}\}$.
    \end{itemize}
    Now we show that each possible $\X_n$ can be extended into a possible $\X_{n+1}$. Given $\X_n$ that is possible, prove that at least one of $s_{n+1},\neg s_{n+1}$ can be added to $\X_n$ to form $\X_{n+1}$ that is possible. 
Assume $\X_n\cup\{s_{n+1}\}$ is not possible, then $\vdash_{\STNES}\us{t}{\neg t'}$ for some $t, t'\in \X\cup\{s_{n+1}\}$. We need to show that $\nvdash_{\STNES}\us{g}{\neg g'}$ for all $g, g'\in \X\cup\{\neg s_{n+1}\}$. Suppose not, then $\vdash_{\STNES}\us{g}{\neg g'}$ for some $g, g'\in \X\cup\{\neg s_{n+1}\}$. Since $\X_n$ is possible,  \textit{at least one} of $t$ and $t'$ must be $s_{n+1}$, and \textit{at least one} of $g$ and $g'$ must be $\neg s_{n+1}$. Furthermore, by Proposition~\ref{prop.gnginvalid} and soundness, $\nvdash_{\STNES}\us{u}{\neg u}$. Therefore \textit{exactly one of} $t$ and $t'$ is  $s_{n+1}$, and \textit{exactly one of} $g$ and $g'$ is $\neg s_{n+1}$. In the following we derive contradictions from $\vdash_{\STNES}\us{g}{\neg g'}$ and $\vdash_{\STNES}\us{t}{\neg t'}$ based on four cases. 

Let us consider the case when $t'=s_{n+1}$ and $g'=\neg s_{n+1}$, thus $t, g\in \X_n$. By double negation axiom, $\vdash_{\STNES}\us{g}{s_{n+1}}$ and $\vdash_{\STNES}\us{t}{\neg s_{n+1}}$. Then we have $\vdash_{\STNES}\us{t}{\neg g}$ by contrapositive and Barbara.
Then it contradicts to the assumption that $\X_n$ is possible and we are done. The case when $t=s_{n+1}$ and $g=\neg s_{n+1}$ can be proved similarly using contrapostive and double negation. 

Now let us consider the case when $t=s_{n+1}$ and $g'=\neg s_{n+1}$, then we have 
$\vdash_{\STNES}\us{g}{s_{n+1}}$ and $\vdash_{\STNES}\us{s_{n+1}}{\neg t'}$. By Barbara, we have $\vdash_{\STNES}\us{g}{\neg t'}$, contradicting to the assumption that $\X_n$ is possible. Similar for the case when $t'=s_{n+1}$ and $g=\neg s_{n+1}$. 

    Consequently, at least one of $s_{n+1},\neg s_{n+1}$ can be added to $\X_n$ to form $\X_{n+1}$ that is possible.

 Let $\X = \bigcup_{n\in\mathbb{N}} \X_n$. Note that each $t\in \X$ has to be added or ``readded'' at some finite step $\X_k$ thus any two $t_1,t_2\in \X$ must be included in some $\X_j$. Therefore $\not\vdash_{\STNES}\us{t_1}{\neg t_2}$ since all the $\X_n$ are possible.

    Finally, we prove that $\X$ is a type. It is complete since one of $s_n,\neg s_n$ is added at some $\X_n$. It is consistent since if $t,\neg t\in \X$, but by axiom double negation we have $\vdash_{\STNES}\us{t}{\neg\neg t}$, contradicting the fact that $\X$ is possible. Now for provably Barbara: If $t_1\in \X$ and $\vdash_{\STNES}\us{t_1}{t_2}$, then $\vdash_{\STNES}\us{t_1}{\neg\neg t_2}$, hence $\neg t_2\not \in \X$ since $\X$ is possible. By its completeness, $t_2\in \X$.
\end{proof}

In the following, we build a canonical model for consistent sets of \textit{existential formulas}. Note that we use a fixed set $\mathbb{N}$ as the domain and assign a type to each number in $\mathbb{N}$ on each world, i.e., a world is simply a function from natural numbers to types. The accessibility relation is defined as usual in modal logic.  

\begin{definition}[Canonical Model for Existential Formulas]
    The canonical model for existential formulas of $\STNES$ is defined as $\M^E = (W^E,\{R^E_i\}_{i\in I},D^E,\rho^E)$, where:

    \begin{itemize}
        \item $W^E = \mathbb{W}^\mathbb{N}$. That is: a world $w$ is a map from $\mathbb{N}$ to types.
        \item $w_1 R^E_iw_2$ iff $\K_ig\in w_1(n)$ entails $g\in w_2(n)$ for all $n\in \mathbb{N}$, $g\in Term^{NES}(U)$. 
        \item $D^E = \mathbb{N}$
        \item $\rho^E(w,A) = \{n\mid A\in w(n)\}$.
    \end{itemize}
\end{definition}
\begin{proposition}[Reflexivity]
    The canonical model for existential formulas of $\STNES$ is reflexive.
\end{proposition}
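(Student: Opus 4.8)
The plan is to check reflexivity directly against the definition of the relation $R_i^E$, reducing it to a single property of types together with the T-axiom. Fix an arbitrary world $w\in W^E$ and an arbitrary agent $i\in I$; the goal is $w\,R_i^E\,w$. Unfolding the definition of $R_i^E$, this amounts to showing that for every $n\in\mathbb{N}$ and every term $g\in\TermNES$, if $\K_i g\in w(n)$ then $g\in w(n)$. Since a world is by definition a map from $\mathbb{N}$ to types, each $w(n)$ is a type $\X\in\mathbb{W}$, so the whole statement reduces to the following claim about an arbitrary type: $\K_i g\in\X$ implies $g\in\X$.

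To establish this claim I would invoke the ``Respects Provably Barbara'' clause in the definition of a type. Recall that $\us{\K_i g}{g}$ is an axiom of $\STNES$, hence $\vdash_{\STNES}\us{\K_i g}{g}$. Thus, if $\K_i g\in\X$, then taking $g_1=\K_i g$ and $g_2=g$ in the Respects-Provably-Barbara condition yields $g\in\X$, which is exactly what we need. Running this argument uniformly over all $n$ and all $g$ gives $w\,R_i^E\,w$, and since $w$ and $i$ were arbitrary, the canonical model is reflexive.

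There is no substantive obstacle here: the entire proof is a one-line consequence of the T-axiom $\us{\K_i g}{g}$ and the closure of types under provable Barbara. The only thing to be careful about is bookkeeping, namely that the ``for all $n$, for all $g$'' quantifiers in the definition of $R_i^E$ are handled by the fact that the type membership argument is completely independent of the choice of $n$ and $g$. (It is worth noting for the later $\SFourNES$ and $\SFiveNES$ cases that the transitivity and Euclidean properties of the canonical relation will instead rely on the additional axioms $\us{\K_i g}{\K_i\K_i g}$ and $\us{\neg\K_i g}{\K_i\neg\K_i g}$, but those are not needed for the present reflexivity statement.)
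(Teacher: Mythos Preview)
Your proof is correct and follows essentially the same approach as the paper: both unfold the definition of $R_i^E$ and invoke the T-axiom $\us{\K_ig}{g}$ together with the fact that each $w(n)$ is a type and hence respects provably Barbara.
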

\begin{proof}
    For arbitrary $g\in Term^{NES}(U)$, $w\in W^E$, if $\K_ig\in w(n)$, then since $\vdash_{\STNES}\us{\K_ig}{g}$, and $w(n)$ respects provably Barbara, $g\in w(n)$. Hence $w R_i^Ew$.
\end{proof}

To show that the canonical model satisfies the desired existential formulas, the key is to show that $\rho^{E^+}(w,g) = \{n\in \mathbb{N}\mid g\in w(n)\}$. That is: an object has property $g$ if $g$ is in the type it corresponds to. Similar to the proof of truth lemma in propositional modal logic, we have to prove an existence lemma for the induction step for $\K_i$. The existence lemma reads: if in $w$, an object is not known to be $g$, then $w$ must see a world where the object is not $g$.

\begin{lemma}[Existence Lemma]
    For all $w$, $m\in\mathbb{N}$, $t\in Term^{NES}(U)$ s.t. $\neg \K_it\in w(m)$, there is $w'$ s.t. $w R^E_iw'$ and $\neg t\in w'(m)$. 
\end{lemma}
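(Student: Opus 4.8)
The plan is to mimic the standard existence lemma from propositional modal logic, but carried out on the level of types for the fixed object $m$. The key reduction is this: the canonical relation $w R^E_i w'$ is required to respect every object simultaneously, so I cannot freely build $w'$ just for the object $m$; I must produce a whole world $w' \in W^E = \mathbb{W}^{\mathbb{N}}$. The natural strategy is to fix the relevant type at coordinate $m$ and fill in all other coordinates $n \neq m$ so as to guarantee $w R^E_i w'$ automatically. For those other coordinates, I would set $w'(n)$ to be any type extending the ``unboxing'' collection $\{g \mid \K_i g \in w(n)\}$; I first need to check that this collection is possible (so that Lemma~\ref{lem.witloose} applies), and that whatever type extends it is forced to be $R^E_i$-accessible from $w$ at coordinate $n$.

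\medskip

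\noindent\textbf{Construction at the distinguished coordinate.} The heart of the argument is coordinate $m$. I start from the collection
\[
\Y_m := \{\,g \mid \K_i g \in w(m)\,\} \cup \{\neg t\}.
\]
I would first argue $\Y_m$ is possible, i.e.\ $\nvdash_{\STNES} \us{g_1}{\neg g_2}$ for any $g_1, g_2 \in \Y_m$. The interesting sub-case involves $\neg t$: I must rule out $\vdash_{\STNES}\us{g}{\neg\neg t}$ (equivalently $\us{g}{t}$ by double negation) and $\vdash_{\STNES}\us{\neg t}{\neg g}$ for $g$ an unboxed term, and also $\vdash_{\STNES}\us{\neg t}{\neg \neg t}$. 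Suppose for contradiction that $\vdash_{\STNES}\us{g}{t}$ where $\K_i g \in w(m)$. Applying the $K$-rule gives $\vdash_{\STNES}\us{\K_i g}{\K_i t}$, so by Provably-Barbara applied inside the type $w(m)$ we get $\K_i t \in w(m)$. But $\neg \K_i t \in w(m)$, contradicting consistency of the type $w(m)$. The other sub-cases ($g_1, g_2$ both unboxed; or the $\us{\neg t}{\neg\neg t}$ case) reduce either to consistency of $w(m)$ via the $K$-rule and Barbara, or to Proposition~\ref{prop.gnginvalid} with soundness (which forbids $\us{u}{\neg u}$). Once $\Y_m$ is shown possible, Lemma~\ref{lem.witloose} yields a type $\X_m \supseteq \Y_m$; in particular $\neg t \in \X_m$.

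\medskip

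\noindent\textbf{Assembling $w'$ and verifying accessibility.} I then define $w'$ by $w'(m) := \X_m$ and, for $n \neq m$, $w'(n) :=$ some type extending the (similarly checked possible) collection $\{g \mid \K_i g \in w(n)\}$. By construction $\neg t \in w'(m)$, so the conclusion of the lemma holds once I verify $w R^E_i w'$. For this I must show: for every $n$ and every $g$, if $\K_i g \in w(n)$ then $g \in w'(n)$. This is immediate by design, since each $w'(n)$ was chosen to contain the unboxing set $\{g \mid \K_i g \in w(n)\}$ (at $n=m$ as part of $\Y_m$, at $n \neq m$ directly). Hence $w R^E_i w'$, completing the proof.

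\medskip

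\noindent\textbf{Main obstacle.} I expect the delicate point to be the possibility check for $\Y_m$, specifically the case mixing the new $\neg t$ with an unboxed premise $g$. The crux is turning an object-level entailment $\us{g}{t}$ into a modal one $\us{\K_i g}{\K_i t}$ so that it collides with the given $\neg \K_i t \in w(m)$; this is exactly where the $K$-rule (restricted to provable sequents) is indispensable, and I must be careful that the rule's restriction to theorems is respected — it is, because $\us{g}{t}$ here is a genuine theorem $\vdash_{\STNES}$, not merely a hypothesis. A secondary subtlety is ensuring that assigning arbitrary extending types at all coordinates $n \neq m$ does not accidentally violate accessibility; this is handled by always seeding each coordinate with its own unboxing set, which is what the relation $R^E_i$ demands coordinatewise.
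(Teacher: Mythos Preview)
Your proposal is correct and follows essentially the same approach as the paper: define the seed set $\Y_m=\{g\mid \K_ig\in w(m)\}\cup\{\neg t\}$, verify its possibility via the $K$-rule, Proposition~\ref{prop.gnginvalid}, and the type properties of $w(m)$, extend it with Lemma~\ref{lem.witloose}, and handle the remaining coordinates by extending their unboxing sets. The only cosmetic difference is that the paper routes the ``both unboxed'' case through the fact that types are \emph{possible} (deriving $\vdash\us{\K_ig_1}{\neg\K_ig_2}$), whereas you phrase it via consistency of $w(m)$; these are equivalent.
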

\begin{proof}
    Consider the set $\mathcal{Y} = \{g\mid \K_ig\in w(m)\}\cup\{\neg t\}$. Prove that it is possible. Towards a contradiction, suppose  $\vdash_{\STNES}\us{t_1}{\neg t_2}$ for some $t_1,t_2\in \{g\mid \K_ig\in w(m)\}\cup\{\neg t\}$. By K principle we have $\vdash_{\STNES}\us{\K_it_1}{\K_i\neg t_2}$. There are three cases to be considered.
    
    Consider the case where $\K_it_1,\K_it_2\in w(m)$.   By Proposition \ref{prop.deriv}, $\vdash_{\STNES}\us{\K_i \neg t_2}{\dK_i\neg t_2}$ and Barbara, $\vdash_{\STNES}\us{\K_it_1}{\neg \K_it_2}$, contradicting the fact that $w(m)$ is possible. 

    Suppose $t_1=t_2=\neg t$, by Proposition \ref{prop.deriv}, $\vdash_{\STNES}\us{\K_i \neg t}{\K_i t}$ which entails $\vdash_{\STNES}\us{\K_i \neg t}{\dK_i t}$ but this is not possible by soundness, since it is not valid over T-models according to Proposition \ref{prop.gnginvalid}. 

    If $\K_it_1\in w(m)$ and $t_2 = \neg t$, then $\vdash_{\STNES} \us{t_1}{\neg \neg t}$ entails $ \vdash_{\STNES}\us{\K_it_1}{\K_i t}$, which contradicts the fact that $\neg \K_it\in w(m)$ and $w(m)$ is possible. If $\K_it_2\in w(m)$ and $t_1 = \neg t$, it leads to a contradiction as well since from $\vdash_{\STNES}\us{\K_it_1}{\neg \K_it_2}$, we have the symmetric $\vdash_{\STNES}\us{\K_it_2}{\neg \K_it_1}$ by contrapostive. 
    
    Consequently $\not\vdash_{\STNES} \us{t_1}{\neg t_2}$ for all $t_1,t_2\in \mathcal{Y}$. By Lemma~\ref{lem.witloose}, $\mathcal{Y} = \{g\mid \K_ig\in w(m)\}\cup\{\neg t\}$ can be extended to a type. Denote it by $\X_m$. Clearly, by repeating the reasoning in the above first case, for each $n\not=m\in \mathbb{N}$ we can find an $\X_n\in \mathbb{W}$ such that $\{g\mid \K_ig\in w(n)\}\in \X_n$. Let $w'$ then be defined by $w'(n) = \X_n$ for each $n$. Then $\neg t\in w'(m)$ and $w R_i^Ew'$. 
\end{proof}
\begin{lemma}[Truth Lemma for Terms]\label{lem.tle}
    $\rho^{E^+}(w,g) = \{m\mid g\in w(m)\}$ for all $g\in Term^{NES}(U)$.
\end{lemma}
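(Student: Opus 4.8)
The plan is to prove the identity $\rho^{E^+}(w,g) = \{m\mid g\in w(m)\}$ by induction on the structure of the term $g$, mirroring the truth lemma for propositional modal logic. Since terms are generated by $g ::= A \mid \K_i g \mid \neg g$, there is one base case and two inductive steps. For the base case $g = A\in U$, the claim is immediate from the definition of the canonical valuation: $\rho^{E^+}(w,A) = \rho^E(w,A) = \{m\mid A\in w(m)\}$, because $\rho^{E^+}$ agrees with $\rho^E$ on atoms.

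For the complement case $g = \neg g'$, I would compute $\rho^{E^+}(w,\neg g') = D^E - \rho^{E^+}(w,g') = \mathbb{N} - \rho^{E^+}(w,g')$ and then apply the induction hypothesis to rewrite this as $\mathbb{N} - \{m\mid g'\in w(m)\}$. The key point is that each $w(m)$ is a type, hence both complete and consistent, so for every term $g'$ we have $g'\notin w(m)$ if and only if $\neg g'\in w(m)$. This converts the complement set into exactly $\{m\mid \neg g'\in w(m)\}$, closing the case.

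The only genuinely modal step is $g = \K_i g'$, where by definition $\rho^{E^+}(w,\K_i g') = \bigcap_{w R^E_i v}\rho^{E^+}(v,g')$; the induction hypothesis then says that $m$ lies in this set if and only if $g'\in v(m)$ for every $v$ with $w R^E_i v$. I would reduce the case to the biconditional ``$\K_i g'\in w(m)$ iff $g'\in v(m)$ for all $v$ with $w R^E_i v$'' and prove its two directions. The forward direction is immediate from the definition of $R^E_i$, which is tailored so that $\K_i g'\in w(m)$ forces $g'\in v(m)$ in every $R^E_i$-successor $v$. The backward direction, argued contrapositively, is where the work lies: if $\K_i g'\notin w(m)$, then by completeness of the type $w(m)$ we have $\neg\K_i g'\in w(m)$, so the Existence Lemma applied with $t = g'$ supplies a world $w'$ with $w R^E_i w'$ and $\neg g'\in w'(m)$; consistency of $w'(m)$ gives $g'\notin w'(m)$, so $m\notin\rho^{E^+}(w',g')$ by the induction hypothesis and hence $m$ is excluded from the intersection.

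The main obstacle is precisely this backward direction of the $\K_i$ case, but it has effectively been discharged in advance by the Existence Lemma. Consequently the present lemma amounts to a routine structural induction assembled around a single invocation of that lemma together with the completeness and consistency properties of types, and I do not anticipate further difficulties.
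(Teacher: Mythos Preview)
Your proposal is correct and matches the paper's own proof essentially step for step: induction on the structure of terms, with the base case immediate from the definition of $\rho^E$, the negation case handled via completeness and consistency of types, and the $\K_i$ case split into the easy direction (from the definition of $R^E_i$) and the contrapositive direction discharged by the Existence Lemma. There is nothing to add.
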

\begin{proof}
    Apply an induction on terms. The base case is true by definition.

    Case 1: For $\neg g$, $\rho^{E^+}(w,\neg g) = D^E - \rho^{E^+}(w,g) = D^E - \{m\mid g\in w(m)\} = \{m\mid \neg g\in w(m)\}$. The last equality holds because types are consistent and complete.

    Case 2: For $\K_i g$, $\rho^{E^+}(w,\K_i g) =\bigcap_{wR_i^Ew'}\rho^{E^+}(w,g) = \bigcap_{wR_i^Ew'}\{m\mid g\in w(m)\}$, which equals $\{m\mid \K_ig\in w(m)\}$ by the following reasoning:

    $\supseteq$ side is easy to see, since if $m\in \{m\mid \K_ig\in w(m)\}$ and $wR_i^Ew'$, then $\K_ig\in w(m)$ entails $g\in w'(m)$ by definition. Hence $m\in \bigcap_{wR_i^Ew'}\{m\mid g\in w'(m)\}$.

    $\subseteq$ side. Assume $i\in \bigcap_{wR_i^Ew'}\{m\mid g\in w'(m)\}$, then $m\not\in \bigcup_{wR_i^Ew'}\{m\mid g\not\in w'(m)\}$, by the completeness and consistency of $w(m)$, $i\not\in \bigcup_{wR^E_iw'}\{m\mid \neg g\in w'(m)\}$. By Contrapositive of Existence Lemma, $m\not\in \{m\mid \neg \K_ig\in w(m)\}$. Consequently, $m\in \{m\mid \K_ig\in w(m)\}$.  
\end{proof}

Now we can show a set of consistent existential sentences is satisfiable. 

\begin{lemma}[Sets of Consistent  Existential Sentences are Satisfiable]\label{lem.Econsis}
    For a set of existential sentence $\Sigma_{Some}$, if $\not\vdash_{\STNES} \neg\phi_{Some}$ for all $\phi_{Some}\in \Sigma_{Some}$, then $\Sigma_{Some}$ is satisfiable (thus $\STNES$-consistent).
\end{lemma}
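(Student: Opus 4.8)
The plan is to satisfy the entire set $\Sigma_{Some}$ at a single world of the canonical model for existential formulas $\M^E$, exploiting the fact that existential sentences impose no \emph{joint} constraints: each $\es{g_1}{g_2}$ only requires one object simultaneously witnessing $g_1$ and $g_2$, and since $D^E=\mathbb{N}$ is infinite I can devote a separate natural number to each sentence. The whole difficulty is therefore local — per sentence — rather than global.

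First I would convert single-sentence consistency into possibility of a two-element collection of terms, so that the Witness Lemma applies. Fix $\phi_{Some}=\es{g_1}{g_2}\in\Sigma_{Some}$. Since $\neg\es{g_1}{g_2}=\us{g_1}{\neg g_2}$, the hypothesis $\not\vdash_{\STNES}\neg\phi_{Some}$ reads $\not\vdash_{\STNES}\us{g_1}{\neg g_2}$. I claim $\{g_1,g_2\}$ is possible, which requires checking all four ordered pairs. The pair $(g_1,g_2)$ is exactly the hypothesis. For $(g_2,g_1)$: if $\vdash_{\STNES}\us{g_2}{\neg g_1}$, then Contrapositive (Proposition~\ref{prop.deriv}) gives $\vdash_{\STNES}\us{\neg\neg g_1}{\neg g_2}$, and the double-negation axiom $\us{g_1}{\neg\neg g_1}$ together with Barbara yields $\vdash_{\STNES}\us{g_1}{\neg g_2}$, contradicting the hypothesis. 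For the diagonal pairs $(g_1,g_1)$ and $(g_2,g_2)$: by Proposition~\ref{prop.gnginvalid} and soundness, $\not\vdash_{\STNES}\us{g}{\neg g}$ for every term $g$. Hence $\{g_1,g_2\}$ is possible, and by Lemma~\ref{lem.witloose} it extends to a type containing both $g_1$ and $g_2$.

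Next I would assemble the witnessing world. Enumerate $\Sigma_{Some}=\{\phi^0_{Some},\phi^1_{Some},\dots\}$ with $\phi^k_{Some}=\es{g_1^k}{g_2^k}$, and for each $k$ let $\X^k$ be a type containing $g_1^k$ and $g_2^k$ obtained as above; any index not used by an enumerated sentence is filled with an arbitrary type (the empty collection is vacuously possible, so by Lemma~\ref{lem.witloose} at least one type exists). Define $w\in W^E$ by $w(k)=\X^k$. By the Truth Lemma for Terms (Lemma~\ref{lem.tle}), $k\in\rho^{E^+}(w,g_1^k)$ and $k\in\rho^{E^+}(w,g_2^k)$, so $\rho^{E^+}(w,g_1^k)\cap\rho^{E^+}(w,g_2^k)\neq\emptyset$ and thus $\M^E,w\models_{NES}\es{g_1^k}{g_2^k}$. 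Hence every member of $\Sigma_{Some}$ holds at $w$. Since $\M^E$ is reflexive, this exhibits a T-model of $\Sigma_{Some}$, and satisfiability gives $\STNES$-consistency by soundness.

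The enumeration bookkeeping and the appeal to the Truth Lemma are routine; the main obstacle is the per-sentence reduction in the second step, where single-sentence consistency must be turned into all the negative-derivability conditions demanded by the Witness Lemma. This is precisely where the non-derivability $\not\vdash_{\STNES}\us{g}{\neg g}$ from Proposition~\ref{prop.gnginvalid}, together with Contrapositive and double negation, is indispensable.
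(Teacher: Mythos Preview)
Your proposal is correct and follows essentially the same route as the paper: reduce each $\es{g_1}{g_2}$ to the possibility of $\{g_1,g_2\}$ via the hypothesis, Contrapositive, and Proposition~\ref{prop.gnginvalid} with soundness; extend to a type by Lemma~\ref{lem.witloose}; assign one type per natural number; and conclude via Lemma~\ref{lem.tle}. The only cosmetic difference is in the bookkeeping for unused indices---the paper fills them with $\X_0$ (splitting on whether $\Sigma_{Some}$ is finite), while you fill them with an arbitrary type obtained from the vacuously possible empty collection---but this is immaterial.
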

\begin{proof}
    Enumerate sentences in $\Sigma_{Some}$ as $\phi_0, \phi_1, \dots$. For each $n$, suppose $\phi_n = \es{g_1}{g_2}$, we show that $\{g_1,g_2\}$ is possible. First note that since $\neg\phi$ is an abbreviation, the assumption $\not\vdash_{\STNES} \neg\phi_{Some}$ says $\not\vdash_{\STNES}\us{g_1}{\neg g_2}$. By contrapostive, $\not\vdash_{\STNES}\us{g_2}{\neg g_1}$. By Proposition~\ref{prop.gnginvalid}, $\us{g_1}{\neg g_1},\us{g_2}{\neg g_2}$ are not valid, thus cannot be proved in $\STNES$ by soundness. Therefore $\{g_1,g_2\}$ is possible and can be extended as a type by Lemma~\ref{lem.witloose}; call it $\X_n$. 
    
   Now we can define a $w\in W^E$. If $\Sigma_{Some}$ is infinite, let $w(n) = \X_n$ for all $n\in\mathbb{N}$; if not, let $w(n) = \X_n$ for $n\leq |\Sigma_{Some}|$, and $w(n) = \X_0$ for $n > |\Sigma_{Some}|$. Now we can show $\M^E,w\models_{NES} \Sigma_{Some}$ since each $\phi_n\in \Sigma_{some}$ is at least witnessed by $n$ due to our construction of $w$ and Lemma \ref{lem.tle}. Consistency of $\Sigma_{some}$ follows by soundness. 
\end{proof}

The weak completeness follows from the above lemma. 
\begin{theorem}[Weak Completeness]
    If $\models_{NES} \phi$, then $\vdash_{\STNES}\phi$.
\end{theorem}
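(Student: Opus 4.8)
The plan is to prove the contrapositive, namely that every $\STNES$-unprovable formula $\phi$ admits a countermodel, so that all the real work is offloaded onto the satisfiability results already established, above all Lemma~\ref{lem.Econsis}. Since the grammar forces every formula to be either a universal $\us{g_1}{g_2}$ or an existential $\es{g_1}{g_2}$, I would split into these two cases and handle them quite differently.

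The existential case is essentially vacuous: I claim no existential sentence is ever valid, so the implication $\models_{NES}\es{g_1}{g_2}\Rightarrow\vdash_{\STNES}\es{g_1}{g_2}$ holds trivially. Indeed, by Proposition~\ref{prop.gnginvalid} the sentence $\es{g_1}{g_1}$ is invalid over T-models, so there is a pointed model $\M,w$ with $\rho_w^+(g_1)\cap\rho_w^+(g_1)=\rho_w^+(g_1)=\emptyset$; in that same model $\rho_w^+(g_1)\cap\rho_w^+(g_2)=\emptyset$, so $\es{g_1}{g_2}$ also fails there. Hence no existential target formula is valid, and nothing needs to be proved about it.

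The substantive case is $\phi=\us{g_1}{g_2}$, which I would argue by contraposition: assuming $\not\vdash_{\STNES}\us{g_1}{g_2}$, I construct a model falsifying it. The bridge is that $\us{g_1}{g_2}$ and $\us{g_1}{\neg\neg g_2}$ are interderivable, via the double-negation axioms $\us{g_2}{\neg\neg g_2}$, $\us{\neg\neg g_2}{g_2}$ together with Barbara; hence $\not\vdash_{\STNES}\us{g_1}{g_2}$ yields $\not\vdash_{\STNES}\us{g_1}{\neg\neg g_2}$. Unfolding the abbreviation, $\us{g_1}{\neg\neg g_2}$ is precisely $\neg\es{g_1}{\neg g_2}$, so the single existential sentence $\es{g_1}{\neg g_2}$ satisfies the hypothesis of Lemma~\ref{lem.Econsis}. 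That lemma then delivers a pointed model $\M,w$ with $\M,w\models_{NES}\es{g_1}{\neg g_2}$, i.e. $\rho_w^+(g_1)\cap\rho_w^+(\neg g_2)\neq\emptyset$, equivalently $\rho_w^+(g_1)\not\subseteq\rho_w^+(g_2)$, which is a countermodel to $\us{g_1}{g_2}$. Thus $\not\models_{NES}\us{g_1}{g_2}$, completing the contrapositive.

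I expect that the only genuine difficulty lives entirely in Lemma~\ref{lem.Econsis} (and the canonical model for existential formulas, the Witness Lemma, and the Truth/Existence Lemmas feeding into it); granting those, the weak completeness theorem is a short reduction. The two points that demand care in this final assembly are (i) reducing the universal case to a \emph{single} existential sentence so that Lemma~\ref{lem.Econsis} applies directly, and (ii) disposing of existential target formulas through their outright invalidity via Proposition~\ref{prop.gnginvalid}, rather than attempting to realize them.
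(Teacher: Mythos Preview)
Your proposal is correct and follows essentially the same route as the paper: dismiss the existential case via Proposition~\ref{prop.gnginvalid} (the paper phrases this using the soundness of Existence, you argue it semantically, which is equivalent), and for the universal case reduce $\not\vdash_{\STNES}\us{g_1}{g_2}$ to the satisfiability of the singleton $\{\es{g_1}{\neg g_2}\}$ by Lemma~\ref{lem.Econsis}. You are in fact slightly more explicit than the paper about the double-negation bridge $\us{g_1}{g_2}\leftrightarrow\us{g_1}{\neg\neg g_2}=\neg\es{g_1}{\neg g_2}$, which the paper leaves implicit.
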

\begin{proof}
   By Proposition~\ref{prop.gnginvalid} and the validity of the rule of Existence, we have $\not\models_{NES} \phi_{Some}$ for any existential sentence $\phi_{Some}$. Hence it suffices to prove that for all universal sentence $\phi_{All}$, if $\models_{NES} \phi_{All}$, then $\vdash_{\STNES}\phi_{All}$. Which is equivalent to showing if $\not\vdash_{\STNES} \phi_{All}$, then $\not\models_{NES} \phi_{All}$. Hence it suffices to show that for all existential sentence $\phi_{Some}$, if $\not\vdash_{\STNES} \neg\phi_{Some}$, then $\phi_{Some}$ is satisfiabe, which follows from Lemma~\ref{lem.Econsis} w.r.t.\ a singleton set.
\end{proof}

\subsection{Strong Completeness}
Normally, a weak completeness result naturally leads to strong completeness if the logic is compact. However, even though $\LNES$ is indeed compact as it is a fragment of FOML, strong completeness does not easily follow and requires an argument based on Lemma \ref{lem.Econsis}. That is because in syllogistic, formulas are not closed under conjunction. Consequently, weak completeness does not lead to the satisfiability of every finite consistent formula set. Now we proceed to give a proof of strong completeness, again by building a (more complicated) canonical models, but for arbitrary maximal consistent sets.

Again, inspired by the notion of \textit{point} in \cite{Moss11}, we define the \textit{$\Delta$-type} to describe the sets of maximal properties an object may exemplify given the maximal consistent set  $\Delta$. 
\begin{definition}[$\Delta$-type]
    Given an MCS $\Delta$, a \emph{$\Delta$-type}, denoted by $\X$ is a subset of $Term^{NES}(U)$ s.t. 
    \begin{itemize}
        \item If $g_1\in \X$ and $\Delta\vdash_{\STNES} \us{g_1}{g_2}$, then $g_2\in\X$. (Respects Barbara)
        \item For all $g\in Term^{NES}(U)$, either $g\in \X$ or $\neg g\in \X$. (Completeness)
        \item For all $g\in Term^{NES}(U)$, $g,\neg g$ are not both in $\X$. (Consistency)
    \end{itemize}
    Denote the set of all $\Delta$-types by $\mathbb{W}(\Delta)$.
\end{definition}

Given an existential sentence $\es{g_1}{g_2}\in \Delta$, we expect there to be some type $\X$ exemplifying both $g_1,g_2$. To show this,  we first generalize the notion of a \textit{possible} set of terms w.r.t.\ a maximal consistent set $\Delta$.

\begin{definition}
Given a maximal consistent set $\Delta$, call a set of terms $\mathcal{Y}$ \emph{$\Delta$-possible}, if for all $t_1,t_2\in \mathcal{Y}$, $\Delta\vdash_{\STNES}\es{t_1}{t_2}$.
\end{definition}
It is easy to see that the $\Delta$-types are $\Delta$-possible based on the fact that $\Delta$ is an MCS. The following lemma is the counterpart of Lemma 11.2 in \cite{Moss11} in the setting of orthoposet-based algebraic semantics. We present the following direct proof in our setting.  
\begin{lemma}[Witness Lemma for $\Delta$-Possible Collection] \label{lem.witness}
     Each set of terms $\X_0$ that is $\Delta$-possible can be extended to a $\Delta$-type $\X\in \mathbb{W}(\Delta)$. 
\end{lemma}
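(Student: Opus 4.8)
The plan is to mirror the proof of Lemma~\ref{lem.witloose}, relativizing the ambient provability $\vdash_{\STNES}$ to $\Delta$-provability $\Delta\vdash_{\STNES}$ throughout. First I would record the reformulation that, since $\Delta$ is an MCS, a collection $\Y$ is $\Delta$-possible iff $\Delta\not\vdash_{\STNES}\us{t_1}{\neg t_2}$ for all $t_1,t_2\in\Y$ (because $\neg\es{t_1}{t_2}=\us{t_1}{\neg t_2}$ and $\Delta$ decides every formula). In this form, $\Delta$-possibility is literally the notion of possibility from Lemma~\ref{lem.witloose} with $\vdash_{\STNES}$ replaced by $\Delta\vdash_{\STNES}$, so the skeleton of that argument can be reused: enumerate $\TermNES$ as $s_0,s_1,\dots$, start from $\X_0$, and build an increasing chain of $\Delta$-possible sets $\X_n$ with $\X_{n+1}=\X_n\cup\{s_{n+1}\}$ or $\X_n\cup\{\neg s_{n+1}\}$, taking $\X=\bigcup_n\X_n$ as the desired $\Delta$-type.

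The heart of the matter is the extension step: from a $\Delta$-possible $\X_n$, at least one of $\X_n\cup\{s_{n+1}\}$ and $\X_n\cup\{\neg s_{n+1}\}$ must be $\Delta$-possible. Here lies the one genuine difference from Lemma~\ref{lem.witloose}. There, the fact that $\not\vdash_{\STNES}\us{u}{\neg u}$ (via Proposition~\ref{prop.gnginvalid} and soundness) guaranteed that, whenever an extension failed, \emph{exactly} one member of the offending pair was the newly added term, which drove a clean four-case analysis. Relative to $\Delta$ this safeguard disappears, since $\Delta$ may well prove $\us{s_{n+1}}{\neg s_{n+1}}$. The main obstacle is therefore to dispatch these degenerate cases directly.

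I would handle them as follows, writing $s=s_{n+1}$. If $\Delta\vdash_{\STNES}\us{s}{\neg s}$, then I claim $\X_n\cup\{\neg s\}$ is $\Delta$-possible: NonExistence (Proposition~\ref{prop.deriv}) gives $\Delta\vdash_{\STNES}\us{u}{\neg s}$ for every term $u$, and combining this with $\Delta\vdash_{\STNES}\es{u}{u}$ (from $u\in\X_n$) via Darii and Conversion yields $\Delta\vdash_{\STNES}\es{\neg s}{u}$, while Non-emptiness gives $\Delta\vdash_{\STNES}\es{\neg s}{\neg s}$. The symmetric situation $\Delta\vdash_{\STNES}\us{\neg s}{s}$ (equivalently $\us{\neg s}{\neg\neg s}$) is treated dually, adding $s$ instead; and both cannot hold at once, since together they force $\Delta\vdash_{\STNES}\es{\neg s}{\neg\neg s}$, i.e.\ $\Delta$ inconsistent. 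Once these cases are excluded, neither pair in a hypothetical simultaneous failure can consist of the new term repeated, so exactly one entry of each pair is fresh, and the four-case derivation of Lemma~\ref{lem.witloose} goes through verbatim (using the double-negation axioms, Contrapositive and Barbara) to contradict the $\Delta$-possibility of $\X_n$.

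Finally I would verify that $\X=\bigcup_n\X_n$ is a $\Delta$-type. Completeness is immediate from the construction, and $\Delta$-possibility of $\X$ follows because any two of its members already lie in a common $\X_j$. Consistency holds since $t,\neg t\in\X$ would give $\Delta\vdash_{\STNES}\es{t}{\neg t}$, hence $\Delta$ inconsistent. For Respects Barbara, if $t_1\in\X$ and $\Delta\vdash_{\STNES}\us{t_1}{t_2}$ but $t_2\notin\X$, then $\neg t_2\in\X$ by completeness, so $\Delta\vdash_{\STNES}\es{t_1}{\neg t_2}$; Conversion and Darii then produce $\Delta\vdash_{\STNES}\es{\neg t_2}{t_2}$, again an inconsistency of $\Delta$. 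I expect the degenerate-case bookkeeping of the extension step to be the only place requiring real care, the remainder being a routine relativization of Lemma~\ref{lem.witloose}.
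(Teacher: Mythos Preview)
Your proposal is correct and follows essentially the same approach as the paper: build the $\Delta$-type by a step-by-step extension, showing that at each stage one of $s$ or $\neg s$ can be added while preserving $\Delta$-possibility, then verify the union is a $\Delta$-type. The paper organizes the extension step slightly differently---it works throughout with the positive formulation $\Delta\vdash_{\STNES}\es{t_1}{t_2}$ and proves directly that ``not~(1) implies~(2)'', with your degenerate case $\Delta\vdash_{\STNES}\us{s}{\neg s}$ appearing as its Case~1 and the other degenerate case absorbed into Case~2---but the underlying argument and the use of NonExistence, Non-emptiness, Barbara, Contrapositive and the double-negation axioms are the same.
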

\begin{proof}
     Enumerate all terms in $Term^{NES}(U)-\X_0$ as $\{s_n\}$. Construct a series of subsets of $\TermNES$ s.t. $\X_0\subseteq \X_1\subseteq \X_2 \dots$ and:

    \begin{itemize}
        \item $\X_n$ is $\Delta$-possible: For all $t_1,t_2\in \X_n$, $\Delta\vdash_{\STNES}\es{t_1}{t_2}$.
        \item $\X_{n+1} = \X_n\cup\{g\}$, where $g = s_n$ or $\neg s_n$.
    \end{itemize}

    Now we show by induction that such a sequence can be constructed.
    
    By assumption $\X_0$ is $\Delta$-possible. 

    Given $\X_n$ s.t. for all $t_1,t_2\in \X_n$, $\Delta\vdash_{\STNES}\es{t_1}{t_2}$, and $s_{n+1}$, prove that at least one of $s_{n+1},\neg s_{n+1}$ can be added to $\X_n$ to form $\X_{n+1}$ s.t. it remains $\Delta$-possible. Essentially, we have to show either (1) $\es{t}{s_{n+1}}\in\Delta$ for all $t\in \X_n$ and $\es{s_{n+1}}{s_{n+1}}\in\Delta$, or (2) $\es{t}{\neg s_{n+1}}\in\Delta$ for all $t\in \X_n$ and $\es{\neg s_{n+1}}{\neg s_{n+1}}\in\Delta$. 

    We prove that not (1) leads to (2). If (1) is not the case, there are two cases. Case 1: $\es{s_{n+1}}{s_{n+1}}\not\in\Delta$. Then $\us{s_{n+1}}{\neg s_{n+1}}\in \Delta$ since $\Delta$ is maximal. By derived rule nonexistence, $\us{g}{\neg s_{n+1}}\in\Delta$ for all $g\in Term^{NES}(U)$. For each $t\in \X_n$, $\es{t}{t}\in \Delta$, hence $\es{t}{\neg s_{n+1}}\in \Delta$ by Darii. For $\neg s_{n+1}$, by rule non-emptiness and that $\us{s_{n+1}}{\neg s_{n+1}}\in \Delta$, $\es{\neg s_{n+1}}{\neg s_{n+1}}\in \Delta$. Hence (2) holds.
    
    Case 2: Suppose $\es{t}{s_{n+1}}\not\in \Delta$ for some $t\in \X_n$, we need to show that (2) holds. For $\neg s_{n+1}$, if $\es{\neg s_{n+1}}{\neg s_{n+1}}\not\in \Delta$, then $\us{\neg s_{n+1}}{s_{n+1}}\in\Delta$.  Since $\es{t}{s_{n+1}}\not\in \Delta$, $\us{t}{\neg s_{n+1}}\in\Delta$, then $\us{t}{s_{n+1}}\in \Delta$ by Barbara, but since $t\in \X_n$, $\es{t}{t}\in \Delta$. This leads to $\es{t}{s_{n+1}}\in \Delta$, a contradiction to the assumption.
    We still need to show $\es{t'}{\neg s_{n+1}}\in\Delta$ for all $t'\in \X_n$. Assume towards a contradiction that $\es{t'}{\neg s_{n+1}}\not\in\Delta$ for some $t'\in \X_n$, then $\us{t'}{\neg \neg s_{n+1}}\in\Delta$. Since $\es{t}{s_{n+1}}\not\in \Delta$ then $\us{t}{\neg s_{n+1}}\in \Delta$. The following deduction shows that $\Delta\vdash_{\STNES}\us{t}{\neg t'}$, contradicting $\es{t}{t'}\in \Delta$, which follows from our induction assumption that $\X_n$ is $\Delta$-possible.

    \begin{prooftree}
        \AxiomC{$\us{t}{\neg s_{n+1}}$}
        \AxiomC{$\us{t'}{\neg\neg s_{n+1}}$}
        \AxiomC{}
        \UnaryInfC{$\us{\neg\neg s_{n+1}}{s_{n+1}}$}
        \RightLabel{Barbara}
        \BinaryInfC{$\us{t'}{s_{n+1}}$}
        \RightLabel{Contrapositive}
        \UnaryInfC{$\us{\neg s_{n+1}}{\neg t'}$}
        \RightLabel{Barbara}
        \BinaryInfC{$\us{t}{\neg t'}$}
    \end{prooftree}

    Consequently, either (1) or (2) holds and at least one of $s_{n+1},\neg s_{n+1}$ can be added to $\X_n$ to form $\X_{n+1}$ that is $\Delta$-possible. 

    \

    Let $\X = \bigcup_{n\in\mathbb{N}} \X_n$. Then $\Delta\vdash_{\STNES}\es{t_1}{t_2}$ for all $t_1,t_2\in \mathcal{\X}$.

    Finally, we prove that $\X$ is a $\Delta$-type. It is complete since one of $s_n,\neg s_n$ is added at each step, and all predicates in $U$ are eventually visited. It is consistent since $\Delta$ is consistent, so $\Delta\not\vdash_{\STNES} \es{t}{\neg t}$ for all $t$, hence $t,\neg t$ can't both be in $\X$. Finally we show that it respects Barbara: If $t_1\in \X$ and $\us{t_1}{t_2}\in \Delta$, then $\neg t_2\not \in \X$, otherwise we have $\es{t_1}{\neg t_2}\in\Delta$, contradicting the consistency of $\Delta$. By completeness, $t_2\in \X$.
\end{proof}




Now we start to construct a canonical model for $\STNES$, and show that every maximal consistent set is satisfiable in it. Compared to the previous construction, we now need to take the maximal consistent sets (MCS) into consideration. A world $w$ is a pair of an MCS $\Delta$ and a map from $\mathbb{N}$ to $\mathbb{W}(\Delta)$.  By abusing the notation, as in the previous subsection, we write $w(m)$ for $f(m)$ if $w=\lr{\Delta, f}$.  
\begin{definition}[Canonical Model for $\STNES$]
    The canonical model for $\STNES$ is defined as $$\M^* = (W^*,\{R^*_i\}_{i\in I},D^*,\rho^*)$$ where:
    \begin{itemize}
        \item $W^* = \bigcup_{\Delta\in MCS}\{\lr{\Delta, f}\mid f\in \mathbb{W}(\Delta)^\mathbb{N}$\}. 
        \item $wR^*_iw'$ iff $\K_ig\in w(m)$ entails $g\in w'(m)$ for all $m\in \mathbb{N}$, $g\in Term^{NES}(U)$. 
        \item $D^* = \mathbb{N}$
        \item $\rho^*(\lr{\Delta,f},A) = \{m\in\mathbb{N}\mid A\in f(m)\}$ for all $A\in U$.
    \end{itemize}
\end{definition}

It is not hard to show reflexivity as in the previous subsection.

\begin{lemma}[Reflexivity]
    The canonical model for $\STNES$ is reflexive.
\end{lemma}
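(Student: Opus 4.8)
The plan is to mirror exactly the reflexivity argument already given for the canonical model for existential formulas (the Reflexivity proposition in the previous subsection), adapting it to the two-component worlds $\lr{\Delta, f}$. The statement to prove is that $wR^*_iw$ holds for every $w\in W^*$ and every agent $i\in I$. Unpacking the definition of $R^*_i$, this amounts to showing: for all $m\in\mathbb{N}$ and all $g\in Term^{NES}(U)$, if $\K_i g\in w(m)$ then $g\in w(m)$.

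First I would fix an arbitrary world $w=\lr{\Delta, f}\in W^*$, an agent $i$, a number $m\in\mathbb{N}$, and a term $g$, and assume $\K_i g\in w(m)$, i.e.\ $\K_i g\in f(m)$. By construction, $f(m)$ is a $\Delta$-type, hence in particular it respects Barbara: if $g_1\in f(m)$ and $\Delta\vdash_{\STNES}\us{g_1}{g_2}$, then $g_2\in f(m)$. The key ingredient is the T-axiom $\us{\K_i g}{g}$, which is an axiom of $\STNES$ and therefore trivially derivable from $\Delta$, giving $\Delta\vdash_{\STNES}\us{\K_i g}{g}$. Applying the Barbara-respecting closure of $f(m)$ to $\K_i g\in f(m)$ and $\Delta\vdash_{\STNES}\us{\K_i g}{g}$ yields $g\in f(m)=w(m)$, which is exactly what is needed. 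Since $m$, $g$, and $i$ were arbitrary, we conclude $wR^*_i w$.

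There is essentially no obstacle here: the argument is a verbatim transposition of the earlier reflexivity proof, the only cosmetic difference being that a $\Delta$-type respects Barbara relative to the MCS $\Delta$ (i.e.\ closure under $\Delta\vdash_{\STNES}\us{g_1}{g_2}$) rather than respecting provably Barbara (closure under $\vdash_{\STNES}\us{g_1}{g_2}$) as in the existential-formula model. Since $\vdash_{\STNES}\us{\K_i g}{g}$ is an outright theorem, it is a fortiori a $\Delta$-consequence, so the weaker closure property of $\Delta$-types already suffices and the distinction does not matter for this lemma. Thus the proof is a single short paragraph invoking the T-axiom and the defining closure clause of $\Delta$-types, with no case analysis and no appeal to the more delicate Witness or Existence lemmas.
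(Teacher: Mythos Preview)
Your proposal is correct and matches the paper's own proof essentially verbatim: fix $w=\lr{\Delta,f}$, use the T-axiom $\us{\K_i g}{g}$ to get $\Delta\vdash_{\STNES}\us{\K_i g}{g}$, and apply the Barbara-closure of the $\Delta$-type $f(m)$ to pass from $\K_i g\in f(m)$ to $g\in f(m)$. Your observation that the distinction between ``respects provably Barbara'' and ``respects Barbara'' is immaterial here (since the T-axiom is an outright theorem) is exactly the point, and the paper handles it the same way by noting $\us{\K_i g}{g}\in\Delta$.
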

\begin{proof}
    Take arbitrary $g\in Term^{NES}(U)$, $w\in W^*$. Assume $\Delta$ is the maximal consistent set behind $w$. If $\K_ig\in w(m)$, then since $\vdash_{\STNES}\us{K_ig}{g}$, $\us{K_ig}{g}\in\Delta$. Then $g\in w(m)$ since $w(m)$ respects Barbara. Hence $wR^*_iw$.
\end{proof}

\begin{lemma}[Existence Lemma]
    For all $w$, $m\in \mathbb{N}$, $t\in Term^{NES}(U)$ s.t. $\neg \K_it\in w(m)$, there is $w'$ s.t. $wR_i^*w'$ and $\neg t\in w'(m)$. 
\end{lemma}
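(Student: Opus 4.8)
The plan is to imitate the structure of the Existence Lemma for $\M^E$ from the previous subsection, but with one essential new ingredient: since possibility is now relativized to an MCS, the successor world $w'$ cannot in general reuse the MCS $\Delta$ sitting behind $w$. Indeed, $\neg\K_i t\in w(m)$ only tells us that $\neg\K_i t$ is realized in $\Delta$, and (by soundness together with Proposition~\ref{prop.gnginvalid}) this does not force $\neg t$ itself to be nonempty in $\Delta$, so a fresh MCS $\Delta'$ must be manufactured. Writing $\Delta$ for the MCS behind $w$, I would consider the set of existential sentences
\[
\Delta_0' = \bigcup_{n\in\mathbb{N}}\{\es{h_1}{h_2}\mid \K_i h_1,\K_i h_2\in w(n)\}\ \cup\ \{\es{h}{\neg t},\ \es{\neg t}{\neg t}\mid \K_i h\in w(m)\}.
\]
These are precisely the existential sentences that must belong to the target MCS so that each coordinate set $\{g\mid \K_i g\in w(n)\}$ (and, for $n=m$, that set together with $\neg t$) becomes $\Delta'$-possible.

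First I would show that $\Delta_0'$ is consistent. By Lemma~\ref{lem.Econsis} it suffices to check that each member is \emph{individually} consistent, i.e.\ that its negation is not a theorem of $\STNES$. For $\es{h_1}{h_2}$ with $\K_i h_1,\K_i h_2\in w(n)$: if $\vdash_{\STNES}\us{h_1}{\neg h_2}$, the $K$-rule gives $\vdash_{\STNES}\us{\K_i h_1}{\K_i\neg h_2}$, and with the T-axiom $\us{\K_ig}{g}$ and Barbara we get $\vdash_{\STNES}\us{\K_i h_1}{\neg h_2}$; combining this with $\Delta\vdash_{\STNES}\es{\K_i h_1}{\K_i h_2}$ (which holds because $w(n)$ is $\Delta$-possible) via the T-axiom, Darii and Conversion yields $\Delta\vdash_{\STNES}\es{h_2}{\neg h_2}$, contradicting the consistency of $\Delta$. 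The clause $\es{h}{\neg t}$ is analogous: $\vdash_{\STNES}\us{h}{\neg\neg t}$ would give $\vdash_{\STNES}\us{h}{t}$, hence $\vdash_{\STNES}\us{\K_i h}{\K_i t}$ by the $K$-rule, clashing with $\Delta\vdash_{\STNES}\es{\K_i h}{\neg\K_i t}$ (available since $\K_i h,\neg\K_i t\in w(m)$). Finally $\es{\neg t}{\neg t}$ is individually consistent because its negation $\us{\neg t}{\neg\neg t}$ has the forbidden shape $\us{g}{\neg g}$ with $g=\neg t$, hence is unprovable by Proposition~\ref{prop.gnginvalid} and soundness.

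Having established that $\Delta_0'$ is consistent, I would extend it to an MCS $\Delta'$. By construction every coordinate set $\{g\mid \K_i g\in w(n)\}$ is $\Delta'$-possible, and $\{g\mid \K_i g\in w(m)\}\cup\{\neg t\}$ is $\Delta'$-possible as well (using Conversion for the symmetric pairs); applying Lemma~\ref{lem.witness} extends each of them to a $\Delta'$-type $\X_n$. Setting $f'(n)=\X_n$ and $w'=\lr{\Delta',f'}$ gives a world of $W^*$, and $wR^*_iw'$ holds because $\{g\mid \K_i g\in w(n)\}\subseteq \X_n=w'(n)$ for every $n$, while $\neg t\in \X_m=w'(m)$ is exactly the required conclusion.

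The main obstacle is the step with no analogue in the weak-completeness proof: recognizing that $w'$ must live in a new MCS and certifying the consistency of $\Delta_0'$. This is where the argument leans decisively on Lemma~\ref{lem.Econsis} — that a family of existential sentences is consistent as soon as each one is — since $\LNES$ offers no conjunction with which to bundle the requirements directly. The $\es{\neg t}{\neg t}$ clause, which is precisely the obstruction preventing us from simply reusing $\Delta$, is the delicate case, and it is rescued by Proposition~\ref{prop.gnginvalid}.
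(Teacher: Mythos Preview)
Your proof is correct and follows essentially the same approach as the paper: form a set of existential sentences encoding the required $\Delta'$-possibility constraints, invoke Lemma~\ref{lem.Econsis} to check consistency sentence by sentence, extend to an MCS $\Delta'$, and then apply Lemma~\ref{lem.witness} at each coordinate to build $w'$. Your explicit inclusion of $\es{\neg t}{\neg t}$ in $\Delta_0'$ is a small extra precaution the paper does not take; the paper's $\Sigma$ omits it and tacitly relies on $\{g\mid\K_ig\in w(m)\}$ being nonempty (so that Existence applied to some $\es{g}{\neg t}\in\Sigma$ yields $\es{\neg t}{\neg t}\in\Delta'$).
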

\begin{proof}
Assume $w$ = $\lr{\Delta,f}$ where $\Delta$ is a maximal consistent set. Consider $\Sigma =  \{\es{g}{\neg t}\mid \K_ig\in w(m)\}\cup\bigcup_{n\in \mathbb{N}}\{\es{g_1}{g_2}\mid \K_ig_1,\K_ig_2\in w(n)\}$, where the second part of the union is to  make sure we can obtain the right types. We show $\Sigma$ is consistent. Note that $\Sigma$ is made up of existential sentences only, thus by Lemma~\ref{lem.Econsis}, it suffices to prove that $\not\vdash_{\STNES}\neg \phi$ for all $\phi\in \Sigma$. 

    Given $\phi = \es{g}{\neg t}\in \Sigma$ for some $\K_ig\in w(m)$, assume for contradiction that $\vdash_{\STNES}\us{g}{t}$. Then by K principle, $\vdash_{\STNES}\us{\K_ig}{\K_it}$, hence $\us{\K_ig}{\K_it}\in \Delta$ and $\K_it\in f(m)$ since $f(m)$ respects Barbara, but $\neg \K_it\in w(m)$, contradicting consistency of $w(m)$.

    Given $\phi = \es{g_1}{g_2}\in \Sigma$ for some $\K_ig_1,\K_ig_2\in w(n)$, assume for contradiction $\vdash_{\STNES}\us{g_1}{\neg g_2}$. Again by K principle, $\vdash_{\STNES}\us{\K_ig_1}{\K_i\neg g_2}$. By Proposition~\ref{prop.deriv}, we have $\vdash_{\STNES}\us{\K_i \neg g_2}{\dK_i\neg g_2}$ and $\vdash_{\STNES}\us{\dK_i\neg  g_2}{\neg \K_i g_2}$. Now by Barbara, we have $\vdash_{\STNES}\us{\K_ig_1}{\neg \K_ig_2}$. Then $\us{\K_ig_1}{\neg \K_ig_2}\in \Delta$ and hence $\neg \K_ig_2\in w(n)$ since $w(n)$ respects Barbara. This is a contradiction to $\K_ig_2\in w(n)$ and that $w(n)$ is consistent.

    Since $\Sigma$ is consistent, we can expand $\Sigma$ to a maximal consistent set $\Delta'$ by a Lindenbaum-like argument. Observe that for all $n\neq m$,  $\K_ig_1,\K_ig_2\in w(n)$, we have $\es{g_1}{g_2}\in \{\es{g_1}{g_2}\mid \K_ig_1,\K_ig_2\in w(n)\} \subseteq \Delta'$, hence $\{g\mid \K_ig\in w(n)\}$ is $\Delta'$-possible and can be expanded to a $\Delta'$-type by Lemma~\ref{lem.witness}, denote it by $\X_n$. Similarly, as $\{g\mid \K_ig\in w(m)\}\cup\{\neg t\}$ is possible too, it can be expanded to a $\Delta'$-type, denote it by $\X_m$. Let $h$ be a function from $\mathbb{N}$ to $\mathbb{W}(\Delta')$ s.t. $h(n) = \X_n$. It is clear that $(\Delta,f)R_i^*(\Delta',h)$ and $\neg t\in h(m)$.
\end{proof}

Now we can establish the truth lemma similar to the one in the previous section. 
\begin{lemma}[Truth Lemma]\label{lem.tl}
    $\rho^{*^+}(w,g) = \{m\in\mathbb{N}\mid g\in w(m)\}$ for all $g\in Term^{NES}(U)$.
\end{lemma}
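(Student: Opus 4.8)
The plan is to mirror the proof of Lemma~\ref{lem.tle} almost verbatim, proceeding by induction on the structure of the term $g$, now working inside the richer canonical model $\M^*$ whose worlds carry both an MCS $\Delta$ and a map into $\mathbb{W}(\Delta)$. Writing $w = \lr{\Delta, f}$ and recalling the convention $w(m) = f(m)$, the base case $g = A \in U$ is immediate from the definition $\rho^*(\lr{\Delta,f},A) = \{m \mid A \in f(m)\}$, which is exactly $\{m \mid A \in w(m)\}$.

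For the negation step $g = \neg h$, I would unfold the definition of $\rho^{*^+}$ to get $\rho^{*^+}(w, \neg h) = D^* - \rho^{*^+}(w, h)$, apply the induction hypothesis to rewrite this as $\mathbb{N} - \{m \mid h \in w(m)\}$, and then invoke the completeness and consistency of the $\Delta$-type $w(m)$ to conclude that $h \notin w(m) \iff \neg h \in w(m)$. This yields $\{m \mid \neg h \in w(m)\}$ as required.

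The only step carrying content is the modal case $g = \K_i h$. Here $\rho^{*^+}(w, \K_i h) = \bigcap_{w R_i^* w'} \rho^{*^+}(w', h)$, which by the induction hypothesis equals $\bigcap_{w R_i^* w'} \{m \mid h \in w'(m)\}$, and I must show this equals $\{m \mid \K_i h \in w(m)\}$. The $\supseteq$ direction is direct from the definition of $R_i^*$: if $\K_i h \in w(m)$ and $w R_i^* w'$, then $h \in w'(m)$, so $m$ lies in the intersection. For $\subseteq$ I argue contrapositively: if $\K_i h \notin w(m)$, then by completeness of the $\Delta$-type $w(m)$ we have $\neg \K_i h \in w(m)$, and the preceding Existence Lemma furnishes a $w'$ with $w R_i^* w'$ and $\neg h \in w'(m)$; consistency of $w'(m)$ then gives $h \notin w'(m)$, so $m$ fails to belong to the intersection.

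The main obstacle is entirely isolated in this modal case and is already discharged by the Existence Lemma; once that lemma is in hand, the truth lemma is a routine induction with no genuinely new difficulty beyond the weak-completeness version. The only point worth double-checking is that the membership relation $\K_i h \in w(m)$ appealed to by the definition of $R_i^*$ and by the Existence Lemma is the very same one invoked by the induction hypothesis, which holds because all three speak of the identical $\Delta$-type $w(m) = f(m)$; in particular the MCS $\Delta$ attached to $w$ plays no explicit role in the inductive step, entering only through the fact that each $w(m)$ is a genuine $\Delta$-type and hence complete and consistent.
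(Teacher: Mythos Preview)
Your proposal is correct and takes exactly the approach the paper intends: the paper omits the proof entirely, remarking only that one establishes the truth lemma ``similar to the one in the previous section,'' i.e., by mirroring the inductive argument of Lemma~\ref{lem.tle} with the new Existence Lemma handling the modal step. Your write-up carries this out faithfully, and your closing remark that the MCS $\Delta$ enters only implicitly (through $w(m)$ being a $\Delta$-type) is a helpful clarification.
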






Finally, we can show the strong completeness of $\STNES$. 

\begin{theorem}[Strong Completeness for $\STNES$] \label{thm.scomp}
$\STNES$ is strongly complete w.r.t.\ the class of reflexive frames.  
\end{theorem}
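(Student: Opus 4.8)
The plan is to prove the contrapositive by building a reflexive countermodel inside the canonical model $\M^*$. Assume $\Sigma\not\vdash_{\STNES}\phi$; I want a reflexive pointed model satisfying $\Sigma$ but falsifying $\phi$. Since in $\LNES$ every formula has a syntactic negation lying in the language, I would first check that $\Sigma\cup\{\neg\phi\}$ is consistent: if it derived a contradiction, then (taking the discharged assumption to be $\neg\phi$) the RAA rule yields $\Sigma\vdash_{\STNES}\neg\neg\phi$, and the double-negation axioms $\us{g}{\neg\neg g}$, $\us{\neg\neg g}{g}$ together with Barbara give $\Sigma\vdash_{\STNES}\phi$, contradicting the hypothesis. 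A Lindenbaum-style argument (the same one already invoked in the Existence Lemma proof) then extends $\Sigma\cup\{\neg\phi\}$ to a maximal consistent set $\Delta$, so that for every formula $\psi$ exactly one of $\psi,\neg\psi$ lies in $\Delta$.

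Next I would pin down a world $w=\lr{\Delta,f}$ of $\M^*$ whose map $f$ realizes all the existential demands of $\Delta$. Enumerate the (countably many) existential sentences of $\Delta$; for each $\es{g_1}{g_2}\in\Delta$ the pair $\{g_1,g_2\}$ is $\Delta$-possible, since $\es{g_1}{g_2}\in\Delta$ gives $\es{g_1}{g_1},\es{g_2}{g_2},\es{g_2}{g_1}\in\Delta$ by the rules of Existence and Conversion, so by the Witness Lemma (Lemma~\ref{lem.witness}) it extends to a $\Delta$-type. I would assign these witnessing $\Delta$-types to distinct elements of $D^*=\mathbb{N}$ and fill the remaining coordinates with any fixed $\Delta$-type, one of which exists because $\emptyset$ is vacuously $\Delta$-possible and Lemma~\ref{lem.witness} applies. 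This defines $f:\mathbb{N}\to\mathbb{W}(\Delta)$ and hence a legitimate $w\in W^*$.

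Then I would upgrade the term-level Truth Lemma (Lemma~\ref{lem.tl}) to a formula-level statement: $\M^*,w\models_{NES}\psi$ iff $\psi\in\Delta$. For a universal $\us{g_1}{g_2}\in\Delta$, Lemma~\ref{lem.tl} gives $\rho^{*^+}(w,g_i)=\{m\mid g_i\in w(m)\}$, and since each $w(m)$ respects Barbara, $g_1\in w(m)$ forces $g_2\in w(m)$, yielding the inclusion; if $\us{g_1}{g_2}\notin\Delta$ then the witnessed existential $\es{g_1}{\neg g_2}\in\Delta$ breaks it. For existentials, membership supplies a witnessing coordinate directly, whereas $\es{g_1}{g_2}\notin\Delta$ gives $\us{g_1}{\neg g_2}\in\Delta$, and respect for Barbara together with the consistency of each type excludes a common witness. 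Since $\Sigma\cup\{\neg\phi\}\subseteq\Delta$, this gives $\M^*,w\models_{NES}\Sigma$ and $\M^*,w\not\models_{NES}\phi$; as $\M^*$ is reflexive by the Reflexivity Lemma, we conclude $\Sigma\not\models_{NES}\phi$.

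I expect the main obstacle to be the correctness of the witnessing map $f$: verifying that \emph{all} existential sentences of $\Delta$ can be realized simultaneously by genuine $\Delta$-types without conflicting with the universal constraints recorded in $\Delta$. This is precisely where the earlier machinery does the work, since the coordinate-wise realizability is delivered by the Witness Lemma (Lemma~\ref{lem.witness}), which in turn rests on Lemma~\ref{lem.Econsis}; once $f$ is in place, the formula-level truth lemma is routine bookkeeping using only that types are consistent, complete, and respect Barbara.
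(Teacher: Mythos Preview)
Your proposal is correct and follows essentially the same route as the paper: extend $\Sigma$ (together with $\neg\phi$) to an MCS $\Delta$, use the Witness Lemma to produce a $\Delta$-type realizing each existential sentence of $\Delta$, package these as the map $f$, and read off satisfaction from the term-level Truth Lemma via the Barbara-closure of types. One small architectural correction: Lemma~\ref{lem.witness} does not itself rest on Lemma~\ref{lem.Econsis}; the place where Lemma~\ref{lem.Econsis} is essential is the Existence Lemma for $\M^*$, which is what makes the term-level Truth Lemma (Lemma~\ref{lem.tl}) go through for the $\K_i$-case.
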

\begin{proof}
As usual, we show each consistent $\Sigma$ for the $\STNES$ is satisfiable on a reflexive model.

    We first expand $\Sigma$ to a maximal consistent set $\Delta$, and enumerate the existential sentences in $\Delta$ as $\psi_0, \psi_1\dots $. For each $n$, suppose $\psi_n = \es{g_1}{g_2}$, $\{g_1,g_2\}$ is thus it is $\Delta$-possible since $\es{g_1}{g_2}$, $\es{g_2}{g_1}$, $\es{g_1}{g_1}$, $\es{g_2}{g_2}\in \Delta$ by rules of Conversion and Existence. Hence, it can be extended to a $\Delta$-type $\X_n$ in $\mathbb{W}(\Delta)$. Take $f:\mathbb{N}\to \mathbb{W}(\Delta)$ s.t. $f(n) = \X_n$ for all $n$. Show that $\M^*,\lr{\Delta, f} \models_{NES} \Delta$.
    
    For $\us{g_1}{g_2}\in \Delta$: Assume $n\in \rho_{w}^{*^+}(g_1)$, then by Truth Lemma $g_1\in w(n)$, then since $w(n)$ respects Barbara, $g_2\in w(n)$ hence by truth lemma $n\in \rho_{w}^{*^+}(g_2)$. Consequently $\M^*, w\models_{NES} \us{g_1}{g_2}$. 
    
    For $\es{g_1}{g_2}\in \Delta$: Suppose it is enumerated as $\phi_n$. By construction of $w$, $g_1,g_2\in \X_n = w(n)$. By truth lemma $n\in \rho_{\Delta,f}^{*^+}(g_1)\cap \rho_{\Delta,f}^{*^+}(g_2)$. Consequently $\M^*, w\models_{NES} \es{g_1}{g_2}$.
\end{proof}

It is straightforward to adapt the completeness proof with extra axioms enforcing certain frame conditions in the canonical model. 
\begin{theorem}
    $\SFourNES$ and $\SFiveNES$ are strongly complete w.r.t.\ the class of reflexive and transitive frames and the class of frames with equivalence relations respectively.  
\end{theorem}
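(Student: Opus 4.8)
The plan is to re-run the canonical-model argument behind Theorem~\ref{thm.scomp} essentially verbatim, reading every occurrence of $\vdash$ as $\vdash_{\SFourNES}$ (resp.\ $\vdash_{\SFiveNES}$) and redefining $\Delta$-type, $\Delta$-possible, and the canonical model $\M^*$ with respect to the stronger provability relation. The first thing to check is that all the supporting machinery survives the passage to the stronger systems. Since $\SFourNES$ and $\SFiveNES$ extend $\STNES$, every derived rule used in Lemma~\ref{lem.witness}, the Existence Lemma and the Truth Lemma (Darii, Contrapositive, NonExistence, Non-emptiness, the $K$-rule and the double-negation principles) remains available; and the crucial \emph{underivability} facts are supplied by Proposition~\ref{prop.gnginvalid}, which already establishes invalidity of $\us{g}{\neg g}$ and $\es{g}{g}$ over S5 models, hence over all three frame classes. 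By soundness of each system w.r.t.\ its own class, $\not\vdash\us{u}{\neg u}$ still holds, so none of the case analyses breaks. Thus the Witness, Existence and Truth lemmas hold mutatis mutandis, and $\M^*$ is reflexive exactly as before. The only genuinely new obligation is to verify the additional frame conditions.

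For $\SFourNES$ I would show $R_i^*$ is transitive. Suppose $wR_i^*v$ and $vR_i^*u$, and let $\K_i g\in w(m)$. Since $\us{\K_i g}{\K_i\K_i g}$ is an axiom of $\SFourNES$ it lies in the MCS $\Delta$ behind $w$, so because $w(m)$ respects Barbara we get $\K_i\K_i g\in w(m)$; then $wR_i^*v$ gives $\K_i g\in v(m)$ and $vR_i^*u$ gives $g\in u(m)$. Hence $wR_i^*u$, so $R_i^*$ is transitive and $\M^*$ is an S4 model; completeness of $\SFourNES$ w.r.t.\ reflexive transitive frames then follows as in Theorem~\ref{thm.scomp}.

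For $\SFiveNES$ I would additionally show $R_i^*$ is Euclidean, which together with reflexivity and transitivity makes it an equivalence relation. Two ingredients are needed. First, a dual characterization of the canonical relation: if $wR_i^*v$ and $g\in v(m)$, then $\dK_i g\in w(m)$. Indeed, if $\K_i\neg g\in w(m)$ then $wR_i^*v$ forces $\neg g\in v(m)$, contradicting consistency of $v(m)$; so $\K_i\neg g\notin w(m)$ and, by completeness of the type $w(m)$, its negation $\neg\K_i\neg g=\dK_i g$ belongs to $w(m)$. Second, the $\dK$-form of the 5-axiom, $\vdash_{\SFiveNES}\us{\dK_i\K_i g}{\K_i g}$: applying Contrapositive (Proposition~\ref{prop.deriv}) to $\us{\neg\K_i g}{\K_i\neg\K_i g}$ gives $\us{\neg\K_i\neg\K_i g}{\neg\neg\K_i g}$, and composing with the double-negation axiom $\us{\neg\neg\K_i g}{\K_i g}$ by Barbara yields $\us{\dK_i\K_i g}{\K_i g}$, since $\dK_i\K_i g=\neg\K_i\neg\K_i g$. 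Now assume $wR_i^*v$ and $wR_i^*u$ and let $\K_i g\in v(m)$. The dual characterization applied to $wR_i^*v$ yields $\dK_i\K_i g\in w(m)$; the reformulated 5-axiom together with the Barbara-closure of $w(m)$ gives $\K_i g\in w(m)$; and $wR_i^*u$ then delivers $g\in u(m)$. Hence $vR_i^*u$, so $R_i^*$ is Euclidean and therefore an equivalence relation, and completeness of $\SFiveNES$ w.r.t.\ frames with equivalence relations follows.

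I expect the main obstacle to be the S5 step, specifically the two-part bridge just described: identifying the correct $\dK$-reformulation of the 5-axiom and carrying out the term-level derivation of $\us{\dK_i\K_i g}{\K_i g}$ through the double-negation bookkeeping, and pairing it with the dual characterization of $R_i^*$. The transfer of the auxiliary lemmas, by contrast, is routine once one observes that the only soundness input, Proposition~\ref{prop.gnginvalid}, is already phrased over the strongest (S5) class, so it continues to furnish the needed underivability facts in all three systems.
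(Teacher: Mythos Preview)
Your proposal is correct and follows exactly the approach the paper intends: the paper's own ``proof'' is a single sentence remarking that one can adapt the completeness argument with the extra axioms enforcing the appropriate frame conditions on the canonical model, and you have faithfully carried out precisely those details (transitivity from the $4$-axiom, the Euclidean property from the $5$-axiom via the $\dK$-reformulation and the dual characterization of $R_i^*$), while correctly observing that Proposition~\ref{prop.gnginvalid} already supplies the needed underivability facts for all three systems.
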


\section{Conclusions and future work} \label{sec.conc}
In this paper, we have taken the initial steps towards developing an epistemic syllogistic framework. We provided complete axiomatizations with respect to two epistemic syllogistic languages featuring \textit{de re} knowledge. The same techniques can be applied to \textit{belief} instead of knowledge. In fact, for systems concerning consistent belief over serial models, we only need to replace the counterpart of axiom T with D: $\us{Kg}{\dK g}$. Adding counterparts of axioms 4 and 5 will yield a complete system of KD45 belief. So far, the usual axioms can all enforce the canonical frame to adopt the desired structure as their modal logic counterparts. If we proceed without seriality, an additional rule is required: from $\es{Kg_1}{K\neg g_1}$, infer $\us{g_2}{Kg_3}$, to capture the scenario where the current world has no successor. It is evident that syllogisms can be studied in modal contexts other than the epistemic setting as well. 

As for other future work, we will consider the axiomatization problem of the full language of the so-called Aristotelian Modal Logic \cite{Protin22}, and also consider the \textit{de dicto} readings of the modal operators. It is also interesting to study the computational properties of these logics. One observation is that, like the cases of epistemic logics of know-wh \cite{Wang2018}, these epistemic syllogistic languages that we considered are one-variable fragments of FOML that are decidable in general. We will also explore the technical connections to other natural logics extending syllogistics such as \cite{kruckmanmoss2021}, and to the bundled fragments of first-order modal logic where quantifiers and modalities are also packed  to appear together \cite{Wang17d,PadmanabhaRW18}. 

\paragraph{Acknowledgement} This work is supported by NSSF grant 19BZX135 awarded to Yanjing Wang. The authors would like to thank Larry Moss for useful pointers and thank the three anonymous reviewers for their valuable comments that improved the presentation of the paper. 

\bibliographystyle{eptcs}
\bibliography{generic}

\appendix

\section{Proof Sketch of Theorem \ref{thm.es}} \label{app.thm.es}
\begin{proof}[Sketch]
    Note that for $\SEAS$, since there are $\LEAS$-formulas that cannot be negated syntactically, and we cannot equate $\Sigma\vdash_{\SEAS} \phi$ with that $\Sigma\cup\{\neg \phi\}\textit{ is a consistent set for }\SEAS$. Therefore we cannot reduce strong completeness to the satisfiability of any consistent set of formulas. 
    
    We leave the full proof for the extended version of this paper and only present a sketch here. Assume that $\Sigma$ is consistent, otherwise the conclusion is trivial. Separate $\Sigma$ into the non-modal part $\Sigma_0$ and the modal part $\Sigma_\K$. We consider all possible maximal consistent extension of $\Sigma_0$ in Assertoric Syllogistic and denote them by $\{\Delta_i\}_{i\in I}$. For each $\Delta_i\cup\Sigma_\K$, we construct a pointed model $\M_i,w_i = (W^i,R^i,D^i,\rho^i),w_i$ for it. 
    \begin{itemize}
    \item $W^i = \{w_i,v_0,v_1\}$.
    \item $R^i$ is the reflexive closure of $\{(w_i,v_0),(w_i,v_1)\}$.
    \item $D^i$ is $\Delta_i{_{Some+}}\sqcup \Delta'_i{_{Some+}}$, the positive existential sentences of the form $\es{A}{B}$ in $\Delta_i$ and its disjoint copy.
    \item $\rho^i_{w_i}(X) = \{\phi,\phi' \mid \phi=\es{A}{B}\textit{ and } \us{A}{X}\in \Delta_i\textit{ or }\us{B}{X}\in \Delta_i\}$. Where $\phi'$ is the copy of $\phi$.
    
    $\rho^i_{v_0}(X) = \{a\in D^i\mid \Delta^i\vdash \us{C}{KX} \textit{ for some C with } a\in \rho^i_w(C)\}\cup \{\phi=\es{B}{X}\in \Delta_i{_{Some+}}\mid \Delta^i\vdash \es{B}{KX}\}$.

    $\rho^i_{v_1}(X) = D^i - (\{a\in D^i\mid \Delta^i\vdash \us{C}{K\neg X} \textit{ for some C with } a\in \rho^i_w(C)\}\cup \{\phi'\in \Delta'_i{_{Some+}}\mid \phi = \es{B}{B},\Delta^i\vdash \es{B}{K\neg X}\})$.
    \end{itemize}
    The idea for the model is roughly the following: In the new world $v_0$, an object $a$ can have a property $X$ only if $a$ has property $C$ in the real world and $\Delta^i\cup\Sigma$ thinks $\us{C}{KX}$; or $\Delta^i\cup\Sigma$ thinks $\es{B}{KX}$ and $a$ happens to be $\es{B}{X}$. In the new world $v_1$, an object $a$ has every property $A$ unless $a$ has property $C$ and $\Delta^i\cup\Sigma$ thinks $\us{C}{K\neg X}$; or $\Delta^i\cup\Sigma$ thinks $\es{B}{K\neg X}$ and $a$ happens to be the copy of $\es{B}{B}$. We need a disjoint copy of $\Delta_i{_{Some+}}$ in the domain so that the mere fact that $\phi$ happens to have property $C$ does not validate a universal sentence.

    These models collectively describe all the possible models for $\Sigma$ under logical equivalence. Therefore, it can be shown that if $\Sigma\not\vdash_{\SEAS}\phi$, we can always find a $\Delta_i\supseteq \Sigma_0$ s.t. $\M_i,w\not\models \phi$. The collection of these models are called the canonical model family of $\Sigma$. Eventually, we will be able to prove that:
    \begin{enumerate}

        \item All models in the canonical model family satisfy $\Sigma$.
        \item If $\phi$ is satisfied by all models in the canonical model family of $\Sigma$, then $\Sigma\vdash_{\SEAS}\phi$.
    \end{enumerate}

    1. is standard practice. To show 2, we prove the converse: if $\Sigma\not\vdash_{\SEAS}\phi$ then there is one model in the canonical family that falsify it. As an example, we sketch the proof for case $\Sigma\not\vdash_{\SEAS}\us{A}{\K B}$, the other cases are similar. Consider $$\Sigma' = \Sigma_0\cup \{\phi\mid \K\phi\in \Sigma_K\}\cup\{\es{A}{\neg C}\mid \us{C}{\K B}\in \Sigma_K\}$$ 
    
    It can be shown to be consistent as a set of assertoric syllogistic. The main idea is that $\Sigma_0\cup \{\phi\mid \K\phi\in \Sigma_K\}$ is proof theoretic consequence of $\Sigma$, hence it is by assumption consistent. And if $\Sigma_0\cup \{\phi\mid \K\phi\in \Sigma_K\}$ deduces $\us{A}{C}$ for $\us{C}{\K B}\in \Sigma_K$, then $\Sigma\vdash_{\SEAS} \us{A}{\K B}$, which is a contradiction to the assumption. 
    
    Finally, $\Sigma'$ has a maximal consistent extension $\Delta^i$. It can be shown that the model $\M_i,w_i$ for $\Delta^i$ in the canonical model falsifies $\us{A}{\K B}$.
    
    After establishing 1 and 2, Completeness thus follows.
\end{proof}
\end{document}